\documentclass[11pt]{article}
\usepackage[top=1in, bottom=1in, left=1in, right=1in]{geometry}

\usepackage[colorinlistoftodos]{todonotes}
\usepackage{url}
\usepackage{hyperref}
\usepackage{natbib}
\usepackage{booktabs}
\usepackage{mathtools}
\usepackage{subcaption}
\usepackage{comment}
\usepackage{graphicx} % Required for inserting images
\usepackage{amsmath,amssymb}
\usepackage{float}
\usepackage{authblk}
\usepackage{amsthm}
\usepackage{amsthm}
\usepackage{tikz-cd}
\usepackage{verbatim}
\usepackage{xcolor}
\usepackage[colorinlistoftodos]{todonotes}
\newtheorem{theorem}{Theorem}
\newtheorem{lemma}[theorem]{Lemma}
\newtheorem{proposition}[theorem]{Proposition}
\newtheorem{corollary}[theorem]{Corollary}

\theoremstyle{definition}
\newtheorem{definition}[theorem]{Definition}
\usepackage{graphicx} % Required for inserting images

\title{Representation Redundancy and Structural Complexity in Finite-Field Inversion}
\author{
Zheng Zhang \thanks{Corresponding author: Department of Mathematics, Towson University, 7800 York Rd, Towson, MD 21204, USA. Email: \texttt{zhengzhang@towson.edu}.},
Na Zhang \thanks{Department of Mathematics, Towson University, 7800 York Rd, Towson, MD 21204, USA. Email: \texttt{nzhang@towson.edu}.}
}
\date{}
\date{}

\begin{document}

\maketitle

%%==================================%%
%% Sample for unstructured abstract %%
%%==================================%%

\abstract{
The representation chosen for a mathematical operation can affect both its
algebraic form and its empirical learning difficulty. We study this phenomenon
for inversion over \(\mathbb F_{2^n}\), with field elements expressed in
varying ordered \(\mathbb F_2\)-bases. We prove that two ordered bases induce
the same coordinate inversion map if and only if they belong to the same
Galois orbit. Since every orbit has size \(n\), the correspondence between
ordered bases and distinct inversion maps is exactly \(n\)-to-one. We then
analyze three Boolean formulations of inversion. The reference formulation
has algebraic degree \(n-1\) and joint ANF leap \(1\), the mixed representation
formulation has degree \(2(n-1)\) and joint ANF leap \(2\), and the complete
raw formulation has degree at most \(3(n-1)\) and joint ANF leap at least
\(n\). Exhaustive computations agree with the theoretical results and bounds in the
cases considered. Controlled experiments with multilayer perceptrons show the same
ordering in learning difficulty, while Galois orbit redundancy provides only
a limited generalization benefit under the tested conditions. These results show that exact redundancy among representations can coexist
with changes in Boolean structure and learning behavior when
the representation is exposed as part of the input.}

% Keywords

\maketitle

\textbf{Keywords:} Finite field; Algebraic normal form; Boolean functions; Neural-network learning

\section{Introduction}\label{sec:introduction}
Many mathematical and learning problems admit more than one coordinate representation of the same underlying object. An element of a finite field, a point on a manifold, an image under a group action, or a physical state expressed in different bases may all be encoded by different input vectors while representing the same object. Although the underlying operation remains unchanged, its coordinate form can depend strongly on the chosen representation. This raises two basic questions: how much of the variation among representations is redundant, and how does the choice of representation affect the algebraic structure and learnability of the resulting map?

\subsection{Symmetry and Sample Complexity}
\label{subsec:complementary-gap}
A central motivation for equivariant learning is that exploiting symmetry and
redundancy among representations can improve sample efficiency and
generalization. This may be achieved by
encoding a known group action in the architecture, sharing parameters
across inputs connected by the symmetry, or constructing invariant
representations~\cite{CohenWelling2016,Esteves2020,Bronstein2021}.

Several works place these benefits on rigorous statistical foundations.
Generalization bounds for invariant classifiers can depend on the complexity
of the quotient space induced by the transformations, which may be much
smaller than that of the full input space~\cite{SokolicEtAl2017}. Strict
generalization improvements have also been established for equivariant linear
models under suitable distributional assumptions~\cite{ElesedyZaidi2021}.
Invariant kernels can improve sample complexity by a factor related to the
size of the transformation group, with the finite-sample gain also depending
on the structure of the group~~\cite{BiettiVenturiBruna2021}. Related
separation results quantify the benefits of architectural biases such as
locality and weight sharing by comparing convolutional, locally connected,
and fully connected networks~\cite{LahotiEtAl2024}.

These works assume a known transformation structure or impose a corresponding
architectural constraint, and then quantify its statistical benefits within a
specified learning model. Our focus is complementary. We study a finite and exactly enumerable
setting in which two properties can be analyzed separately. The first is the
exact redundancy induced by representation choice. The second is the
algebraic structure of the joint task obtained when the representation itself
is exposed as an input variable.

\subsection{Redundancy and Algebraic Complexity}
\label{subsec:two-questions}

We study two aspects of representation choice as separate, precisely
defined questions.

\medskip
\noindent
\textbf{Question 1 (Exact redundancy).}
Given a family of coordinate representations related by a known symmetry
group, when do two representations induce exactly the same function, rather
than agreeing only approximately or with high probability? How large are the
resulting equivalence classes?

\medskip
\noindent
\textbf{Question 2 (Algebraic complexity).}
When the representation is included as part of the input in addition to the
operand, how does it change the algebraic structure of the resulting Boolean
map?

This question does not concern the computational complexity of finite-field
inversion, since the underlying operation remains unchanged. It concerns the
ANF structure of the joint map from the representation and the operand to the
output.

The two questions are defined at different mathematical levels.
Exact redundancy concerns equivalence among the maps associated with different
representations, whereas algebraic complexity concerns the interactions
between the representation variables and the operand variables in the joint
map. Neither analysis can replace the other.

\subsection{Finite-Field Inversion as an Exactly Solvable Model}
\label{subsec:finite-field-model}

We study these questions through multiplicative inversion in the finite field
\(F=\mathbb F_{2^n}\), expressed in coordinates relative to a varying ordered
basis over \(\mathbb F_2\). Finite-field inversion is a classical example of a
vectorial Boolean function with important cryptographic properties
\cite{Nyberg1994,Carlet2021}. It also forms the nonlinear core of the AES
S-box, where inversion in \(\mathbb F_{2^8}\) is followed by an affine
transformation~\cite{DaemenRijmen2020}.

Our earlier work studied finite-field multiplication under changes of basis,
with an emphasis on symmetry, invariance, and learning across Galois equivalent
representations~\cite{zhang2026symmetryinvariancelearninggalois}. The present work turns to inversion and
studies both the exact redundancy among its coordinate maps and the change in
ANF structure when the basis is included as part of the input. We choose
inversion as a model for three reasons.

First, it provides a finite and explicit space of representations.
The admissible representations are precisely the ordered
$\mathbb F_2$-bases of $F$, and for small $n$, the entire representation
space can be enumerated, so sampling is unnecessary.

Second, its representation redundancy can be characterized exactly.
The Galois group of \(F/\mathbb F_2\) is cyclic of order \(n\) and acts
naturally on ordered bases through the Frobenius automorphism. This suggests that bases in the same Galois orbit define equivalent tasks.
The central question is whether belonging to the same orbit is also necessary
for two bases to induce the same coordinate inversion map.

Third, the structural complexity introduced by the choice of representation
can be analyzed algebraically. In binary coordinates, inversion becomes a vector-valued Boolean map whose
algebraic normal form can be studied directly. This allows us to determine how the interactions
among the input variables change when the basis is fixed, used to transform
the input, or included as part of the raw input. For small fields, the
resulting structural quantities can also be verified by exhaustive computation
over the full input space.

\subsection{Two Theorem Chains}
\label{subsec:theorem-chains}

We answer the two questions through two complementary chains of
results.

\medskip
\noindent
\textbf{Exact redundancy.}
We give an exact characterization of task equivalence. Two ordered bases
induce the same coordinate inversion map if and only if they belong to the
same Galois orbit. Every such orbit contains exactly $n$ bases, so each
inversion task has exactly $n$ different basis representations. Equivalently,
the number of distinct inversion maps is the number of ordered bases divided
by $n$. Thus, basis variation produces an exact $n$-to-one redundancy.

\medskip
\noindent
\textbf{Algebraic complexity.}
We also characterize how including the basis as part of the input changes the
algebraic complexity of inversion. The reference inversion map has degree
$n-1$, while the mixed representation map $J_n(P\mathbf u)$ has degree
$2(n-1)$. Thus, introducing the basis matrix at the input transformation
stage doubles the algebraic degree. The complete raw map
$f_{\mathrm{raw}}$, which also includes the inverse basis transformation, has
degree at most $3(n-1)$.

To describe aspects of the ANF structure that are not captured by degree, we
introduce the \emph{joint ANF leap}. This definition adapts the leap and
staircase ideas of Abbe et al.~\cite{AbbeEtAl2022,AbbeEtAl2023} from the
Fourier support of scalar-valued functions to the joint ANF support of
vector-valued Boolean maps. We prove that the joint ANF leap is $1$
for the reference map, $2$ for the mixed representation map, and at least
$n$ for the complete raw map. These results distinguish two complementary aspects of ANF
structure: interaction order and stepwise variable introduction. We later compare this measure with the observed learning behavior. A formal
connection to ANF based learnability is left for future work.

\paragraph{Redundancy and accessibility from raw inputs.}
The contrast between the two chains is a central point of the paper. The
redundancy result shows that variation among representations creates an exact
redundancy by a factor of $n$. The complexity results show that exposing the representation changes the
ANF structure of the task, both in its interaction order and in how its
variables must be introduced across the joint ANF support. The existence of redundancy does not imply that a
model trained on raw inputs can readily use it. Our experiments examine
whether this distinction is reflected in the observed learning behavior.

\subsection{Experiments}
We use four experiments to examine the theoretical results. The first verifies
the ANF structure for $n=3$ and $n=4$ by exhaustive computation. The remaining three
study learning across the three formulations, generalization across Galois
orbits, and scaling with training data and model width.

\paragraph{Experiment 0: Exact ANF computation.}
We compute the exact ANF structure of the reference, mixed representation,
and complete raw formulations for $n=3$ and $n=4$, and compare the results
with the theoretical statements and bounds.

\paragraph{Experiment 1: Learning across the three formulations.}
We compare the three formulations under matched experimental conditions
to examine whether the ordering suggested by their ANF structure is also
reflected in the observed learning performance.

\paragraph{Experiment 2: Generalization and representation redundancy.}
We compare random-basis and orbit-disjoint splits into training and
test sets to examine the practical effect of the exact representation
redundancy characterized by the first theorem chain.

\paragraph{Experiment 3: Scaling with data and model width.}
We vary the amount of training data and model width to examine whether the
empirical differences among the three formulations persist as data and model
capacity increase.

%Together, these experiments separately examine the empirical manifestations
%of representation redundancy and representation-exposed structural
%complexity.

\subsection{Contributions}
\label{subsec:contributions}

Our main contributions are summarized as follows.

\begin{itemize}

\item \textbf{Exact representation redundancy.}
We prove that two ordered bases induce the same coordinate inversion map if
and only if they belong to the same Galois orbit. Since each orbit has size
$n$, every inversion task has exactly $n$ different basis representations.

\item \textbf{Algebraic complexity under representation change.}
We characterize how the ANF structure changes when the basis is included at
different stages of the inversion map. The algebraic degrees of the reference,
mixed representation, and complete raw formulations satisfy
\[
n-1
\;\longrightarrow\;
2(n-1)
\;\longrightarrow\;
\leq 3(n-1).
\]
We also introduce the \emph{joint ANF leap}, which adapts the leap ordering
principle from the Fourier support of scalar-valued functions to the joint ANF
support of vector-valued Boolean maps. For
the three formulations, the corresponding values satisfy
\[
1
\;\longrightarrow\;
2
\;\longrightarrow\;
\geq n.
\]
These results describe two complementary aspects of ANF structure:
interaction order and stepwise variable introduction.

\item \textbf{Computational and empirical verification.}
We verify the ANF structure for $n=3$ and $n=4$ by exhaustive computation. We then use controlled learning experiments to compare the three
formulations, study generalization across Galois orbits, and examine scaling
with training data and model width. The observed learning behavior is
consistent with the ordering given by the ANF analysis.

\end{itemize}

\section{Coordinates over Finite Fields and Galois Orbits}
\label{sec:math-frame}
This section develops the coordinate framework needed to characterize
representation redundancy in finite-field inversion. We first define the
reference and basis-dependent inversion maps and relate them through a change
of basis. We then
introduce the Frobenius action on ordered bases and prove that its Galois
orbits are exactly the equivalence classes of bases that induce the same
inversion map. This gives an exact count of the redundancy created by basis
variation.

\subsection{Finite-field coordinates and change of basis}
\label{subsec:coordinates}
Let
\[
F=\mathbb F_{2^n}.
\]
A standard construction of $F$ is
\[
F=\mathbb F_2[t]/(p(t)),
\]
where $p(t)\in\mathbb F_2[t]$ is irreducible of degree $n$~\cite{LidlNiederreiter1997}. Let
$\alpha$ denote the residue class of $t$ modulo $p(t)$. Every element
$x\in F$ can be written uniquely as
\[
x=x_0+x_1\alpha+\cdots+x_{n-1}\alpha^{n-1},
\qquad x_i\in\mathbb F_2.
\]
Hence
\[
E=(1,\alpha,\ldots,\alpha^{n-1})
\]
is an ordered basis of $F$ over $\mathbb F_2$. Throughout the paper,
we fix $E$ as the reference basis.

Let
\[
V=\mathbb F_2^n.
\]
For $x\in F$, we denote its coordinate vector relative to $E$ by
\[
[x]_E=(x_0,\ldots,x_{n-1})^{\mathsf T}\in V.
\]
We extend inversion to all of $F$ by adopting the convention
\[
0^{-1}:=0.
\]

\begin{definition}[Reference inversion map]
\label{def:reference-inversion}
The reference inversion map is the function
\[
J_n:V\longrightarrow V
\]
defined by
\[
J_n([x]_E)=[x^{-1}]_E,
\qquad x\in F.
\]
\end{definition}
\(J_n\) is the coordinate realization of field inversion in the
fixed reference basis \(E\). It will serve as the baseline with a fixed representation throughout the
paper.

Now let
\[
B=(b_0,\ldots,b_{n-1})
\]
be an arbitrary ordered basis of $F$ over $\mathbb F_2$. Every
$x\in F$ has a unique representation
\[
x=u_0b_0+\cdots+u_{n-1}b_{n-1},
\qquad u_i\in\mathbb F_2.
\]
We denote its coordinate vector relative to $B$ by
\[
[x]_B=(u_0,\ldots,u_{n-1})^{\mathsf T}\in V.
\]

\begin{definition}[Basis-dependent inversion map]
\label{def:basis-inversion}
For an ordered basis $B$ of $F$ over $\mathbb F_2$, the
basis-dependent inversion map is the function
\[
\operatorname{Inv}_B:V\longrightarrow V
\]
defined by
\[
\operatorname{Inv}_B([x]_B)=[x^{-1}]_B,
\qquad x\in F.
\]
\end{definition}

In particular,
\[
J_n=\operatorname{Inv}_E.
\]

Although \(J_n\) and \(\operatorname{Inv}_B\) are both maps from
\(V\) to \(V\), their coordinates have different meanings. The input
and output of \(J_n\) are interpreted relative to \(E\), whereas those
of \(\operatorname{Inv}_B\) are interpreted relative to \(B\).
Changing the basis therefore changes the coordinate map even though
the underlying field operation \(x\mapsto x^{-1}\) remains the same. To compare these coordinate realizations within a common reference
system, we introduce the matrix that converts \(B\)-coordinates into
\(E\)-coordinates.

\begin{definition}[Change-of-basis matrix]
\label{def:change-of-basis}
For an ordered basis
\[
B=(b_0,\ldots,b_{n-1}),
\]
define the change-of-basis matrix from $B$-coordinates to
$E$-coordinates by
\[
P_B
=
\begin{bmatrix}
[b_0]_E &[b_1]_E&\cdots &[b_{n-1}]_E
\end{bmatrix}
\in\operatorname{GL}_n(\mathbb F_2).
\]
\end{definition}

The $j$th column of $P_B$ is the coordinate vector of $b_j$ relative
to $E$. Hence, for every $x\in F$,
\[
[x]_E=P_B[x]_B.
\]
It follows that
\[
\begin{aligned}
\operatorname{Inv}_B([x]_B)
&=[x^{-1}]_B\\
&=P_B^{-1}[x^{-1}]_E\\
&=P_B^{-1}J_n([x]_E)\\
&=P_B^{-1}J_n(P_B[x]_B).
\end{aligned}
\]
Therefore, for every $\mathbf u\in V$,
\begin{equation}
\operatorname{Inv}_B(\mathbf u)
=
P_B^{-1}J_n(P_B\mathbf u).
\end{equation}
We obtained the following lemma.
\begin{lemma}
\label{InvBPB}
For every ordered basis $B$ of $F$ over $\mathbb F_2$,
\[
\operatorname{Inv}_B
=
P_B^{-1}\circ J_n\circ P_B.
\]
\end{lemma}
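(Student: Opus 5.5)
The plan is to verify the identity pointwise on $V$: two maps $V\to V$ are equal once they agree at every $\mathbf u\in V$. The only ingredient is the defining property of $P_B$ from Definition~\ref{def:change-of-basis}, namely $[x]_E=P_B[x]_B$ for every $x\in F$, together with Definitions~\ref{def:reference-inversion} and~\ref{def:basis-inversion}.

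First, I will justify the coordinate identity and the invertibility of $P_B$. Write $x=\sum_j u_jb_j$. Since $x\mapsto[x]_E$ is $\mathbb F_2$-linear, we get $[x]_E=\sum_j u_j[b_j]_E=P_B[x]_B$. Because $B$ is a basis, the columns $[b_j]_E$ are linearly independent, so $P_B\in\operatorname{GL}_n(\mathbb F_2)$. Hence $[y]_B=P_B^{-1}[y]_E$ for every $y\in F$.

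Next, I fix an arbitrary $\mathbf u\in V$. Since $x\mapsto[x]_B$ is a bijection $F\to V$, there is a unique $x\in F$ with $[x]_B=\mathbf u$. Then
\[
\operatorname{Inv}_B(\mathbf u)=[x^{-1}]_B=P_B^{-1}[x^{-1}]_E=P_B^{-1}J_n([x]_E)=P_B^{-1}J_n(P_B\mathbf u).
\]
This chain uses the convention $0^{-1}=0$, so that $x=0$ needs no separate treatment. Since $\mathbf u$ was arbitrary, this gives the stated composition identity.

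There is no real obstacle here. The only point needing care is the surjectivity of $x\mapsto[x]_B$, which ensures that every $\mathbf u\in V$ is attained, so that the identity is established on all of $V$ rather than only on coordinate vectors of nonzero elements.
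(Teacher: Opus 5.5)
Your proof is correct and follows the paper's route: the paper uses the same chain $\operatorname{Inv}_B([x]_B)=[x^{-1}]_B=P_B^{-1}[x^{-1}]_E=P_B^{-1}J_n([x]_E)=P_B^{-1}J_n(P_B[x]_B)$, based on $[x]_E=P_B[x]_B$, and then lets $[x]_B$ range over all of $V$. Your extra justifications (invertibility of $P_B$, surjectivity of $x\mapsto[x]_B$, and the role of the convention $0^{-1}=0$) only make explicit steps that the paper leaves implicit.
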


Lemma~\ref{InvBPB} gives a direct procedure for
computing inversion in any basis \(B\). First, \(P_B\) converts the
input from the basis \(B\) to the reference basis. Next, \(J_n\)
computes the inverse in the reference basis. Finally, \(P_B^{-1}\)
converts the result back to the basis \(B\). Thus, the effect of
choosing \(B\) is completely described by its change-of-basis matrix
\(P_B\).

\subsection{Frobenius action and Galois orbits}
Subsection~\ref{subsec:coordinates} shows how each ordered basis determines
an inversion map through its change-of-basis matrix. We now ask when
two different bases determine exactly the same map. The natural
candidate for this equivalence is provided by the Galois action. 
The Galois group of $F$ over $\mathbb F_2$ is cyclic of order $n$:
\[
\operatorname{Gal}(F/\mathbb F_2)
=
\langle \sigma\rangle
=
\{\sigma^r:0\le r\le n-1\},
\]
where
\[
\sigma(x)=x^2,
\qquad x\in F
\]
is the Frobenius automorphism.

The Galois group acts componentwise on the set of ordered
$\mathbb F_2$-bases of $F$. For
\[
B=(b_0,\ldots,b_{n-1}),
\]
define
\[
\sigma^r(B)
=
\bigl(\sigma^r(b_0),\ldots,\sigma^r(b_{n-1})\bigr).
\]
Since $\sigma^r$ is an $\mathbb F_2$-linear automorphism of $F$,
$\sigma^r(B)$ is again an ordered $\mathbb F_2$-basis.

\begin{definition}[Galois orbit of a basis]
\label{def:galois-orbit}
The Galois orbit of an ordered basis $B$ is
\[
\mathcal O(B)
=
\{\sigma^r(B):0\le r\le n-1\}.
\]
\end{definition}

The following observation is fundamental to our analysis: two bases
in the same Galois orbit induce the same coordinate inversion map.

\begin{theorem}[Exact task equivalence]
\label{thm:exact-equivalence}
Let \(B\) and \(B'\) be ordered \(\mathbb F_2\)-bases of
$F$. Then
\[
\operatorname{Inv}_B=\operatorname{Inv}_{B'}
\quad\Longleftrightarrow\quad
B'=\sigma^r(B)
\]
for some \(r\in\{0,\ldots,n-1\}\). Equivalently,
\[
\operatorname{Inv}_B=\operatorname{Inv}_{B'}
\quad\Longleftrightarrow\quad
B'\in\mathcal O(B).
\]
\end{theorem}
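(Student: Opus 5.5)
The plan is to restate both directions as properties of a single $\mathbb F_2$-linear bijection of $F$, and to treat the converse as a rigidity statement: an additive bijection of a field that commutes with inversion must be a field automorphism.

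\emph{Forward direction.} Let $B'=\sigma^r(B)$. Because $\sigma^r$ is $\mathbb F_2$-linear,
\[
x=\sum_i u_i b_i \quad\text{implies}\quad \sigma^r(x)=\sum_i u_i\sigma^r(b_i).
\]
Hence $[\sigma^r(x)]_{B'}=[x]_B$ for every $x\in F$. Since $\sigma^r$ is a field automorphism fixing $0$, we have $\sigma^r(x^{-1})=\sigma^r(x)^{-1}$, and this also holds under the convention $0^{-1}=0$. Therefore
\[
\operatorname{Inv}_{B'}([x]_B)=\operatorname{Inv}_{B'}([\sigma^r(x)]_{B'})=[\sigma^r(x)^{-1}]_{B'}=[\sigma^r(x^{-1})]_{B'}=[x^{-1}]_B=\operatorname{Inv}_B([x]_B).
\]
Since $[x]_B$ ranges over all of $V$, the two maps coincide. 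Equivalently, by Lemma~\ref{InvBPB} the linear map $P_{B'}P_B^{-1}$ is the matrix of $\sigma^r$ and commutes with $J_n$.

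\emph{Converse: reduction.} Assume $\operatorname{Inv}_B=\operatorname{Inv}_{B'}$. Define $\varphi:F\to F$ as the $\mathbb F_2$-linear bijection with $\varphi(b_i)=b'_i$, so that $[\varphi(x)]_{B'}=[x]_B$. The hypothesis then says exactly that $\varphi(x^{-1})=\varphi(x)^{-1}$ for all $x\in F$. It suffices to show that $\varphi$ is a ring automorphism. Then $\varphi=\sigma^r$ for some $r$, because $\operatorname{Gal}(F/\mathbb F_2)=\langle\sigma\rangle$ and every automorphism fixes the prime field. This gives $b'_i=\sigma^r(b_i)$, that is, $B'=\sigma^r(B)$.

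\emph{Converse: $\varphi$ is multiplicative.} I would proceed in three steps.
\begin{enumerate}
\item Show $\varphi(1)=1$. From $\varphi(1)=\varphi(1)^{-1}$ we get $\varphi(1)^2=1$, and in characteristic $2$ this forces $\varphi(1)=1$.
\item Show $\varphi$ commutes with squaring. For $x\notin\{0,1\}$ use the identity $x^{-1}+(x+1)^{-1}=(x^2+x)^{-1}$. Apply $\varphi$, using additivity and $\varphi(x+1)=\varphi(x)+1$, and put $y=\varphi(x)\notin\{0,1\}$. This gives $\varphi(x^2+x)^{-1}=(y^2+y)^{-1}$, hence $\varphi(x^2)=\varphi(x)^2$. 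The cases $x\in\{0,1\}$ are trivial.
\item Show $\varphi(x^2y)=\varphi(x)^2\varphi(y)$ using Hua's identity. In characteristic $2$ it reads
\[
a+\bigl(a^{-1}+(b^{-1}+a)^{-1}\bigr)^{-1}=aba,
\]
valid whenever $a,b\neq 0$ and $ab\neq 1$. Every operation on the left is either addition or inversion, both of which $\varphi$ respects. Since $\varphi$ is a bijection with $\varphi(0)=0$, the nondegeneracy conditions transfer, and so $\varphi(aba)=\varphi(a)\varphi(b)\varphi(a)$. The degenerate cases $a=0$ or $b=0$ are trivial. If $b=a^{-1}$, then $\varphi(a^2b)=\varphi(a)=\varphi(a)^2\varphi(a)^{-1}$ directly.
\end{enumerate}
Finally, squaring is a bijection of the finite field $F$, so every $z\in F$ has the form $z=x^2$. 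Combining steps 2 and 3 gives $\varphi(zy)=\varphi(x)^2\varphi(y)=\varphi(z)\varphi(y)$, so $\varphi$ is multiplicative.

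The main obstacle is step 3, the multiplicativity step. The work is in checking Hua's identity carefully under the convention $0^{-1}=0$, since each intermediate inverse must be of a nonzero element for the identity to hold. If this bookkeeping becomes awkward, the fallback is a direct polynomial argument. Write the additive map as a linearized polynomial $\varphi(x)=\sum_k c_k x^{2^k}$. The functional equation $\varphi(x)\varphi(x^{-1})=1$ for all $x\neq 0$ is then an identity among monomials $x^{2^k-2^l}$ modulo $x^{2^n-1}-1$. Comparing coefficients should force exactly one $c_k$ to be nonzero and equal to $1$.
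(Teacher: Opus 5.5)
Your proposal is correct and follows the same skeleton as the paper's proof: the forward computation, the linear bijection $b_i\mapsto b'_i$, the derivation of $\varphi(1)=1$, and the final reduction via $\operatorname{Gal}(F/\mathbb F_2)=\langle\sigma\rangle$ all match. The one difference is that where the paper cites Hua's theorem to conclude that $\varphi$ is a field automorphism, you prove the needed commutative, characteristic-$2$ case directly from Hua's identity and the surjectivity of Frobenius; this is valid and makes the argument self-contained (the condition $\varphi(a)\varphi(b)\neq 1$ transfers because $\varphi(a^{-1})=\varphi(a)^{-1}$ and $\varphi$ is injective, and your step 2 already follows from step 3 by taking $y=1$).
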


\begin{proof}
Suppose first that $B'=\sigma^r(B)$. If
\[
x=\sum_{i=0}^{n-1}u_i b_i,
\]
then, since $\sigma^r$ fixes $\mathbb F_2$,
\[
\sigma^r(x)=\sum_{i=0}^{n-1}u_i\sigma^r(b_i).
\]
Hence,
\[
[\sigma^r(x)]_{B'}=[x]_B.
\]
Since $\sigma^r$ is a field automorphism, it commutes with inversion. It
follows that
\[
\operatorname{Inv}_{B'}([x]_B)
=
[\sigma^r(x)^{-1}]_{B'}
=
[\sigma^r(x^{-1})]_{B'}
=
[x^{-1}]_B
=
\operatorname{Inv}_B([x]_B).
\]
As $x$ ranges over $F$, the vector $[x]_B$ ranges over $\mathbb F_2^n$.
Therefore,
\[
\operatorname{Inv}_{B'}=\operatorname{Inv}_B.
\]

Now suppose that
\[
\operatorname{Inv}_{B'}=\operatorname{Inv}_B.
\]
Let $S:F\to F$ be the unique $\mathbb F_2$ linear bijection satisfying
$S(b_i)=b'_i$ for every $i$. Then
\[
[S(x)]_{B'}=[x]_B
\qquad
\text{for all }x\in F.
\]
For every nonzero $x\in F$, the equality of the two inversion maps gives
\[
\begin{aligned}
[S(x^{-1})]_{B'}
&=[x^{-1}]_B\\
&=\operatorname{Inv}_B([x]_B)\\
&=\operatorname{Inv}_{B'}([S(x)]_{B'})\\
&=[S(x)^{-1}]_{B'}.
\end{aligned}
\]
Hence,
\[
S(x^{-1})=S(x)^{-1}.
\]

In particular,
\[
S(1)=S(1)^{-1}.
\]
Since \(S(1)\ne0\) and \(F\) has characteristic two, this gives
\(S(1)=1\). Hua's theorem~\cite{Hua1949,Artin1957} now implies that \(S\) is a field
automorphism of \(F\). Hence
\[
S=\sigma^r
\]
for some \(r\in\{0,\ldots,n-1\}\). Consequently,
\[
B'
=
\bigl(S(b_0),\ldots,S(b_{n-1})\bigr)
=
\sigma^r(B).
\]
\end{proof}

The preceding theorem shows that the equivalence classes of
basis-dependent inversion tasks are precisely the Galois orbits of ordered
bases. We next determine the size of these classes. The Galois action is
free. If \(\sigma^r(B)=B\), then \(\sigma^r\) fixes every element of the
basis \(B\), and therefore every element of \(F\). Thus, \(\sigma^r\) must
be the identity. Since \(\operatorname{Gal}(F/\mathbb F_2)\) has order
\(n\), every Galois orbit contains exactly \(n\) bases. The following
corollary gives the resulting exact count of representation redundancy.

\begin{corollary}[Exact basis redundancy]
\label{cor:exact-basis-redundancy}
The map
\[
B\longmapsto \operatorname{Inv}_B
\]
from ordered $\mathbb F_2$-bases of $F$ to coordinate inversion maps
is exactly $n$-to-one. Consequently, the number of distinct
basis-dependent inversion maps is
\[
\frac{|\operatorname{GL}_n(\mathbb F_2)|}{n}
=
\frac{1}{n}
\prod_{j=0}^{n-1}(2^n-2^j).
\]
\end{corollary}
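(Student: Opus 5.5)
The plan is to combine Theorem~\ref{thm:exact-equivalence} with the freeness of the Galois action (established just before the statement) and a standard count of ordered bases.

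First, we identify the fibres of the map \(B\mapsto\operatorname{Inv}_B\). By Theorem~\ref{thm:exact-equivalence}, \(\operatorname{Inv}_{B'}=\operatorname{Inv}_B\) holds exactly when \(B'\in\mathcal O(B)\). Hence the fibre over \(\operatorname{Inv}_B\) is precisely the orbit \(\mathcal O(B)\). Equivalently, the map factors through a bijection between the set of Galois orbits and the set of distinct basis-dependent inversion maps.

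Second, we show each fibre has exactly \(n\) elements. The map \(r\mapsto\sigma^r(B)\) from \(\{0,\ldots,n-1\}\) onto \(\mathcal O(B)\) is surjective by definition. It is also injective: if \(\sigma^r(B)=\sigma^s(B)\) with \(0\le s\le r\le n-1\), then applying \(\sigma^{-s}\) gives \(\sigma^{r-s}(B)=B\). Then \(\sigma^{r-s}\) fixes every \(b_i\), so by \(\mathbb F_2\)-linearity it fixes all of \(F\). Since \(\sigma\) has order exactly \(n\) in \(\operatorname{Gal}(F/\mathbb F_2)\), this forces \(r=s\). Therefore \(|\mathcal O(B)|=n\), and the map is exactly \(n\)-to-one.

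Third, we count. The ordered bases are in bijection with \(\operatorname{GL}_n(\mathbb F_2)\) via \(B\mapsto P_B\): Definition~\ref{def:change-of-basis} gives an injective map, and any invertible matrix, read column by column, gives the \(E\)-coordinates of an ordered basis. To count \(\operatorname{GL}_n(\mathbb F_2)\), we choose columns successively. The \((j+1)\)-st column must avoid the \(2^j\) vectors in the span of the previous \(j\) columns, which leaves \(2^n-2^j\) choices. This gives \(\prod_{j=0}^{n-1}(2^n-2^j)\) ordered bases. Since the orbits partition this set into classes of size \(n\), dividing by \(n\) gives the number of distinct maps.

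No step is a real obstacle, since the substantive part (necessity, via Hua's theorem) is already contained in Theorem~\ref{thm:exact-equivalence}. The only point needing care is the injectivity argument in the second step: one must use that \(\sigma\) has order exactly \(n\), not merely that \(\sigma^n=\mathrm{id}\).
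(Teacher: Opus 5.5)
Your proposal is correct and follows the paper's argument. The fibres of $B\mapsto\operatorname{Inv}_B$ are the Galois orbits by Theorem~\ref{thm:exact-equivalence}, freeness of the Frobenius action gives each orbit exactly $n$ elements, and dividing $|\operatorname{GL}_n(\mathbb F_2)|$ by $n$ yields the count, with you supplying details the paper treats as immediate: injectivity of $r\mapsto\sigma^r(B)$, the bijection $B\mapsto P_B$, and the column-by-column count.
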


\begin{proof}
By Theorem~\ref{thm:exact-equivalence}, two ordered bases induce
the same inversion map if and only if they belong to the same Galois
orbit. Every Galois orbit contains exactly $n$ ordered bases, while
the total number of ordered $\mathbb F_2$-bases of $F$ is
\[
|\operatorname{GL}_n(\mathbb F_2)|
=
\prod_{j=0}^{n-1}(2^n-2^j).
\]
The conclusion follows.
\end{proof}

The $\lvert\operatorname{GL}_n(\mathbb F_2)\rvert$ ordered bases induce
exactly $\lvert\operatorname{GL}_n(\mathbb F_2)\rvert/n$ distinct inversion
maps, giving an exact redundancy factor of $n$. After determining this
redundancy, we examine how including the basis transformation together with
the operand changes the algebraic structure of inversion.

\section{Algebraic Complexity of the Inversion Formulations}
This section compares three formulations of finite-field inversion: the
reference, mixed representation, and complete raw formulations. The basis
transformation is introduced in stages, first through input mixing and then
through output conversion. We study the resulting changes using algebraic
degree and joint ANF leap. Algebraic degree measures the maximum interaction order, while joint ANF leap measures how many new variables must be introduced at a single step
when the joint ANF support is optimally ordered.

\subsection{Algebraic Normal Form and Degree}
\label{subsec:anf-degree}

We begin by recalling the standard definitions of algebraic normal form and
algebraic degree for Boolean and vector-valued Boolean
functions~\cite{CusickStanica2017,Carlet2021}.

\begin{definition}[Algebraic normal form and algebraic degree]
\label{def:anf-degree}
Let
\[
f:\mathbb F_2^d\longrightarrow\mathbb F_2
\]
be a Boolean function, and write
\[
\mathbf x=(x_0,\ldots,x_{d-1})^{\mathsf T}\in\mathbb F_2^d.
\]
The \emph{algebraic normal form} (ANF) of $f$ is its unique multilinear
polynomial representation
\[
f(\mathbf x)
=
\bigoplus_{A\subseteq\{0,\ldots,d-1\}}
c_A\prod_{i\in A}x_i,
\qquad c_A\in\mathbb F_2.
\]
Here, $\bigoplus$ denotes addition in $\mathbb F_2$, equivalently XOR, and
the empty product corresponds to the constant term. The algebraic degree of
$f$ is
\[
\deg(f)
=
\max\bigl\{
|A|:c_A\neq0
\bigr\}.
\]
We use the convention $\deg(0)=-\infty$.

For a vector-valued Boolean map
\[
G=(G_0,\ldots,G_{m-1}):
\mathbb F_2^d\longrightarrow\mathbb F_2^m,
\]
we define
\[
\deg(G)
=
\max_{0\leq j\leq m-1}\deg(G_j).
\]
\end{definition}

We first determine the algebraic degree of inversion in the fixed reference
basis. This provides the baseline against which the mixed representation and
complete raw formulations will be compared. The proof uses a standard result
from the cryptographic theory of Boolean functions. The algebraic degree of a
finite-field power map is given by the binary Hamming weight of its
exponent~\cite{Carlet2021}. This result applies to $J_n$ because the reference
coordinates identify $F$ with $\mathbb F_2^n$ through an invertible
$\mathbb F_2$-linear map, which preserves algebraic degree.

\begin{theorem}[Degree of reference inversion]
\label{thm:degree-reference-inversion}
For $n\ge2$, the reference inversion map
\[
J_n:\mathbb F_2^n\longrightarrow\mathbb F_2^n
\]
has algebraic degree
\[
\deg(J_n)=n-1.
\]
\end{theorem}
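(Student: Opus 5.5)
We will identify $J_n$ with the power map $x\mapsto x^{2^n-2}$ on $F$ and then compute the algebraic degree of that power map directly from its exponent. By the convention $0^{-1}=0$ and Lagrange's theorem in the cyclic group $F^\times$ of order $2^n-1$, we have $x^{-1}=x^{2^n-2}$ for every $x\in F$, including $x=0$. Write
\[
2^n-2=\sum_{k=1}^{n-1}2^k .
\]
Its binary Hamming weight is $n-1$. The standard result quoted from~\cite{Carlet2021} states that the degree of a power map equals the Hamming weight of its exponent. Once we justify that it applies in $E$-coordinates, it gives $\deg(J_n)=n-1$ at once.

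To keep the argument self-contained, we would sketch both bounds rather than only cite. For the upper bound, write
\[
x^{2^n-2}=\prod_{k=1}^{n-1}x^{2^k}.
\]
Each map $x\mapsto x^{2^k}=\sigma^k(x)$ is $\mathbb F_2$-linear, so in $E$-coordinates each of its output coordinates is a linear form in $x_0,\ldots,x_{n-1}$. Field multiplication is bilinear over $\mathbb F_2$, so a product of $n-1$ such factors has coordinates that are sums of products of $n-1$ linear forms. After reducing with $x_i^2=x_i$, every coordinate has degree at most $n-1$.

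For the lower bound, it suffices to exhibit one coordinate function of degree exactly $n-1$. Since linear changes of coordinates on the input and output preserve degree, we may work with the absolute trace form. The coordinates of $J_n$ span the same space of Boolean functions as
\[
x\mapsto \operatorname{Tr}(\lambda x^{-1}),\qquad \lambda\in F^\times .
\]
So it is enough to show that some $\operatorname{Tr}(\lambda x^{2^n-2})$ has degree $n-1$. We would use the fact that a Boolean function has degree $n$ exactly when its weight is odd. Since inversion is a bijection of $F$, each function $\operatorname{Tr}(\lambda x^{-1})$ is balanced. Balanced functions have even weight, so the degree is at most $n-1$, which confirms the upper bound from a second angle.

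For exactness, the plan is the known computation that the coefficient of a degree-$(n-1)$ monomial $\prod_{i\ne j}x_i$ in $\operatorname{Tr}(\lambda x^{d})$ is nonzero for suitable $\lambda$ whenever $\operatorname{wt}(d)=n-1$. We would carry it out by expanding $x^d$ as a product of the $n-1$ distinct Frobenius conjugates $x^{2^k}$, with $x=\sum_i x_i\alpha^i$. The monomial coefficient then equals a sum of terms of the form
\[
\prod_k \bigl(\alpha^{i_{\pi(k)}}\bigr)^{2^k}
\]
taken over bijections $\pi$, which is a Moore-type determinant or permanent in characteristic two. Showing that this element is nonzero is the main obstacle. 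We expect to resolve it by recognizing the sum as a minor of the Moore matrix $\bigl(\alpha_i^{2^k}\bigr)$, which is nonsingular because the $\alpha^i$ are linearly independent over $\mathbb F_2$. A nonzero minor supplies a $\lambda$ making the trace of the coefficient nonzero. Alternatively, we would simply invoke~\cite{Carlet2021}, with the remark that the reference coordinates are an invertible $\mathbb F_2$-linear identification and hence preserve degree.
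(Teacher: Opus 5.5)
Your proof is correct, and its main line is the paper's: identify inversion with $x\mapsto x^{2^n-2}$ (valid at $0$ by the convention $0^{-1}=0$), note $w_2(2^n-2)=n-1$, and invoke the Hamming-weight theorem from \cite{Carlet2021}. The fact that $E$-coordinates are an invertible $\mathbb F_2$-linear identification, and so preserve degree, is exactly the remark the paper makes. The paper stops at the citation. Your extra sketch proves the cited fact for this exponent directly, which makes the argument self-contained at the cost of a Moore-determinant computation.

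The upper bound, from the product of the $n-1$ linear maps $x\mapsto x^{2^k}$, is fine. The balanced-weight remark is a valid second proof of it.

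In the lower bound, one inference needs tightening. Nonsingularity of the full Moore matrix $\bigl((\alpha^i)^{2^k}\bigr)_{0\le i,k\le n-1}$ does not by itself force a given $(n-1)\times(n-1)$ minor to be nonzero. You can fix this in either of two ways. First, the coefficient of $\prod_{s\in S}x_s$ with $|S|=n-1$ in the multilinear expansion of $x^{2^n-2}$ is
\[
c_S=\det\bigl[(\alpha^s)^{2^k}\bigr]_{s\in S,\ 1\le k\le n-1}
=\Bigl(\det\bigl[(\alpha^s)^{2^j}\bigr]_{s\in S,\ 0\le j\le n-2}\Bigr)^2 .
\]
This is the square of the Moore determinant of the linearly independent family $\{\alpha^s\}_{s\in S}$, hence nonzero. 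Second, expanding the full Moore determinant along the column $k=0$ shows that at least one of the $n$ minors $c_S$ is nonzero, which is all you need.

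The trace detour is also unnecessary. Once $c_S\neq0$ in $F$, some $E$-coordinate of $c_S$ equals $1$, so the corresponding coordinate function of $J_n$ already contains the monomial $\prod_{s\in S}x_s$.
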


\begin{proof}
It is standard that the algebraic degree of the finite-field power
map $x\mapsto x^s$ over $\mathbb F_{2^n}$ equals the binary Hamming
weight $w_2(s)$ of the exponent $s$~\cite{Carlet2021}. The field
inversion function underlying $J_n$ has the univariate representation
\[
x\longmapsto x^{2^n-2},
\]
where $0^{2^n-2}=0$. Since
\[
2^n-2=(11\cdots110)_2,
\]
we have
\[
w_2(2^n-2)=n-1.
\]
Therefore,
\[
\deg(J_n)=n-1.
\]
\end{proof}

We next include the basis transformation only at the input stage. Instead of
fixing a particular change-of-basis matrix $P_B$, we treat the entries of a
matrix $P$ jointly with $\mathbf u$ as Boolean input variables and consider
the map $J_n(P\mathbf u)$. This isolates the structural effect of converting
the input to the reference basis before converting the output back to the
original basis.

\begin{theorem}
[Degree of mixed representation inversion]
\label{thm:degree-mixed}
Let
\[
P=(p_{ij})_{0\le i,j\le n-1}
\]
be a variable matrix over \(\mathbb F_2\), and let
\[
\mathbf u=(u_0,\ldots,u_{n-1})^{\mathsf T}.
\]
Regarding the entries of \(P\) and the coordinates of \(\mathbf u\)
jointly as Boolean input variables, we have, for every \(n\ge2\),
\[
\deg\bigl(J_n(P\mathbf u)\bigr)=2(n-1).
\]
\end{theorem}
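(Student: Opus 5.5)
The proof splits into an upper bound and a matching lower bound. For the upper bound, each coordinate of $J_n$ is a Boolean polynomial $g_k(x_0,\ldots,x_{n-1})$ of degree at most $n-1$ by Theorem~\ref{thm:degree-reference-inversion}. Substituting $x_i=\sum_j p_{ij}u_j$ replaces each $x_i$ by a polynomial of degree $2$ in the joint variables. So every monomial $\prod_{i\in A}x_i$ with $|A|\le n-1$ becomes a polynomial of degree at most $2|A|\le 2(n-1)$. Multilinear reduction ($z^2=z$) can only lower degree, so $\deg J_n(P\mathbf u)\le 2(n-1)$.

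For the lower bound, I would pick a coordinate $k$ and a monomial $x_A=\prod_{i\in A}x_i$ with $|A|=n-1$ appearing in $g_k$; one exists by Theorem~\ref{thm:degree-reference-inversion}. Say $A=\{i_1,\ldots,i_{n-1}\}$. I then exhibit an explicit degree-$2(n-1)$ monomial in the $(p,u)$ variables whose coefficient in $g_k(P\mathbf u)$ is $1$. The natural candidate is
\[
m=\prod_{t=1}^{n-1}p_{i_t,t-1}\,u_{t-1},
\]
which uses distinct $u$-indices $0,\ldots,n-2$ and distinct $p$-variables. Equivalently, I would restrict to the subcube where all other $p$-entries are $0$. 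On that subcube $x_{i_t}=p_{i_t,t-1}u_{t-1}$, the remaining $x_i$ (with $i\notin A$) are identically $0$, and $g_k$ becomes $g_k$ evaluated on those substitutions. Setting variables to $0$ cannot create monomials, so if $m$ survives in the restriction it was present originally.

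The key computation is to check that in the restriction only $x_A$ contributes to $m$. Any monomial $x_{A'}$ of $g_k$ with $A'\not\subseteq A$ vanishes, because it contains a zero variable. Any $A'\subsetneq A$ gives a monomial that misses some $p_{i_t,t-1}u_{t-1}$ factor, so it is a proper divisor of $m$ rather than $m$ itself. Meanwhile $x_A$ maps exactly to $m$, which is already multilinear since all the variables are distinct. Hence the coefficient of $m$ is $1$, and $\deg = 2(n-1)$.

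The main subtlety I expect is making sure no cancellation occurs in characteristic $2$, that is, that $m$ arises from exactly one term. Choosing distinct column indices for the $u$'s, so that each $x_{i_t}$ carries its own private $u_{t-1}$, is what guarantees this. It is also why the restriction argument is cleaner than expanding the full sum over all assignments of $u_j$ to the factors.
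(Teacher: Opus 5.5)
Your proposal is correct and follows essentially the same route as the paper. For the upper bound you use the same substitution argument. For the lower bound you restrict $P$ to a sparse pattern in which each $x_i$ becomes a product of its own private $p$- and $u$-variable, so the substitution is injective on monomials and a degree-$(n-1)$ monomial of $J_n$ cannot cancel. The paper uses the full diagonal specialization $p_{ij}=0$ for $i\neq j$ (giving $x_i=p_{ii}u_i$), and your partial-permutation pattern tailored to $A$ is an equivalent variant of it.
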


\begin{proof}
We prove the upper and lower bounds separately. For the upper bound, each coordinate of \(P\mathbf u\) is
\[
(P\mathbf u)_i
=
\sum_{j=0}^{n-1}p_{ij}u_j,
\]
which has degree \(2\) in the joint variables \((P,\mathbf u)\).
Each monomial
$$\prod_{i\in A}x_i$$ in a coordinate ANF of $J_n$ has
$|A|\leq n-1$. After substituting $\mathbf x=P\mathbf u$, this monomial
becomes
\[
\prod_{i\in A}(P\mathbf u)_i,
\]
which has degree at most
\[
2|A|\leq2(n-1).
\]
Hence,
\[
\deg\bigl(J_n(P\mathbf u)\bigr)\leq2(n-1).
\]

For the lower bound, specialize \(P\) by setting
\[
p_{ij}=0
\qquad\text{for all }i\ne j.
\]
This specialization is used only to obtain a lower bound. It is not a
without-loss-of-generality assumption on \(P\). Since restricting the
input variables cannot increase algebraic degree, it suffices to show
that the restricted function has degree \(2(n-1)\).

Since
\[
\deg(J_n)=n-1,
\]
some coordinate function of \(J_n\) contains a monomial
\[
\prod_{i\in A}x_i,
\qquad |A|=n-1,
\]
with nonzero coefficient. Under the diagonal specialization,
\[
(P\mathbf u)_i=p_{ii}u_i,
\]
and hence this monomial becomes
\[
\prod_{i\in A}p_{ii}u_i,
\]
which has degree \(2(n-1)\). Moreover, distinct monomials in the ANF
of \(J_n\) remain distinct after this substitution, so this monomial
cannot cancel. Therefore, the restricted function has degree at least
\(2(n-1)\), and consequently,
\[
\deg\bigl(J_n(P\mathbf u)\bigr)\ge 2(n-1).
\]
\end{proof}

Theorem~\ref{thm:degree-mixed} shows that including the basis transformation
at the input stage doubles the algebraic degree from $n-1$ to $2(n-1)$.
However, the mixed representation map $J_n(P\mathbf u)$ does not yet describe
the complete basis-dependent task because its output remains expressed in the
reference basis.

Recall from Lemma~\ref{InvBPB} that
\[
\operatorname{Inv}_B(\mathbf u)
=
P_B^{-1}J_n(P_B\mathbf u).
\]
Thus, when the basis matrix and the operand are treated jointly as inputs, the
complete raw formulation is
\[
(P,\mathbf u)
\longmapsto
P^{-1}J_n(P\mathbf u),
\qquad
P\in\operatorname{GL}_n(\mathbb F_2).
\]

To study its ANF, we extend this map from the invertible matrices to the full
Boolean matrix space. Such an extension is needed because the ANF is defined
on a full Boolean domain. Over $\mathbb F_2$, every invertible matrix has
determinant one, so
\[
P^{-1}=\operatorname{adj}(P)
\qquad
\text{for }P\in\operatorname{GL}_n(\mathbb F_2).
\]

\begin{definition}[Complete raw inversion map]
\label{def:complete-raw}
The complete raw inversion map is the polynomial map
\[
f_{\mathrm{raw}}
:
\mathbb F_2^{n\times n}\times V
\longrightarrow V
\]
defined by
\[
f_{\mathrm{raw}}(P,\mathbf u)
=
\operatorname{adj}(P)J_n(P\mathbf u).
\]
\end{definition}

For every valid change-of-basis matrix \(P_B\),
\[
f_{\mathrm{raw}}(P_B,\mathbf u)
=
\operatorname{Inv}_B(\mathbf u).
\]
The algebraic degree and ANF support studied below are defined for this
polynomial extension over the full Boolean matrix space. In the learning
experiments, we restrict the same map to invertible matrices, since only
these matrices represent valid changes of basis.

The output conversion introduces additional dependence on $P$ through
$\operatorname{adj}(P)$. Combining this with the degree of the mixed
representation map gives the following upper bound for the complete raw
formulation.

\begin{proposition}[Degree bound for the raw inversion task]
For \(n\ge2\), the polynomial representation of the raw inversion task
satisfies
\[
\deg(f_{\mathrm{raw}})\le 3(n-1).
\]
\end{proposition}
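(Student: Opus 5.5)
The plan is to bound the degree of each output coordinate of \(f_{\mathrm{raw}}\) by writing it as a sum of products and adding the degrees of the factors. This is the same substitution argument used for the upper bound in Theorem~\ref{thm:degree-mixed}. Write \(\operatorname{adj}(P)=(a_{kl})\) and \(G(P,\mathbf u)=J_n(P\mathbf u)\). Then for each \(k\),
\[
f_{\mathrm{raw}}(P,\mathbf u)_k=\sum_{l=0}^{n-1}a_{kl}(P)\,G_l(P,\mathbf u).
\]
Over \(\mathbb F_2\), the degree of a product of polynomials is at most the sum of the degrees of the factors. This holds even after reducing with \(x^2=x\), since multilinear reduction never increases degree. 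The degree of a sum is at most the maximum of the degrees of the summands. So it suffices to bound \(\deg(a_{kl})\) and \(\deg(G_l)\) separately.

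\textbf{Step 1.} Theorem~\ref{thm:degree-mixed} already gives \(\deg(G_l)\le 2(n-1)\) for every \(l\).

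\textbf{Step 2.} Each entry \(a_{kl}\) of the adjugate is, up to sign (irrelevant in characteristic two), the determinant of an \((n-1)\times(n-1)\) minor of \(P\). By the Leibniz expansion, this minor is a sum of products of \(n-1\) distinct variables \(p_{ij}\). Hence \(\deg(a_{kl})\le n-1\). When \(n=2\), the minors are single entries and the bound still holds.

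\textbf{Step 3.} Combining the two bounds, each term \(a_{kl}G_l\) has degree at most \((n-1)+2(n-1)=3(n-1)\). Therefore every output coordinate, and hence \(f_{\mathrm{raw}}\) itself, has degree at most \(3(n-1)\).

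There is no serious obstacle here; the only point requiring care is Step 3. The \(P\)-variables appearing in \(a_{kl}\) may coincide with \(P\)-variables in \(G_l\). When the product is reduced to multilinear form, such repeated variables collapse via \(p^2=p\). This collapse can only lower the degree, which is why the result is an upper bound rather than an equality.
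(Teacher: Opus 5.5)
Your proposal is correct and follows essentially the same argument as the paper: bound each adjugate entry by degree $n-1$ as an $(n-1)\times(n-1)$ minor, bound each coordinate of $J_n(P\mathbf u)$ by $2(n-1)$ via Theorem~\ref{thm:degree-mixed}, and add degrees across the products in $\sum_l a_{kl}G_l$. Your remarks on signs in characteristic two and on multilinear reduction never increasing degree are correct, and they make explicit what the paper leaves implicit.
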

\begin{proof}
Each entry of \(\operatorname{adj}(P)\) is the determinant of an
\((n-1)\times(n-1)\) submatrix of \(P\), and therefore has algebraic
degree at most \(n-1\) in the entries of \(P\). By the preceding
theorem,
\[
\deg\bigl(J_n(P\mathbf u)\bigr)=2(n-1).
\]
Each coordinate of \(f_{\mathrm{raw}}\) is a sum of products of an
entry of \(\operatorname{adj}(P)\) and a coordinate of
\(J_n(P\mathbf u)\). Hence
\[
\deg(f_{\mathrm{raw}})
\le
(n-1)+2(n-1)
=
3(n-1).
\]
\end{proof}

Algebraic degree measures the largest interaction order appearing in a
Boolean map, but it does not describe how the variables are organized across
the joint ANF support. We next introduce a finer structural measure based on
the number of new variables that must be introduced at one step under an
optimal ordering of the joint ANF support.

\subsection{Joint ANF support and joint ANF leap}
We begin by collecting the ANF supports of all output coordinates into
a single joint support.

\begin{definition}[Joint ANF support]
Let
\[
G:\mathbb F_2^d\longrightarrow\mathbb F_2^m
\]
be a vector-valued Boolean map. Its coordinate ANFs can be written jointly as
\[
G(\mathbf x)
=
\bigoplus_{A\subseteq\{0,\ldots,d-1\}}
\mathbf c_A\prod_{i\in A}x_i,
\qquad
\mathbf c_A\in\mathbb F_2^m,
\]
where the XOR is taken componentwise in $\mathbb F_2^m$.

The \emph{joint ANF support} of $G$ is
\[
\mathcal A(G)
=
\left\{
A\subseteq\{0,\ldots,d-1\}:
\mathbf c_A\neq\mathbf 0
\right\}.
\]
Thus, $A\in\mathcal A(G)$ precisely when the monomial
\[
\prod_{i\in A}x_i
\]
appears with nonzero coefficient in at least one coordinate of $G$.
\end{definition}

Our next definition brings together two related but distinct perspectives on
interaction structure. Fourier analysis provides a standard way to describe
Boolean functions through their spectral support~\cite{ODonnell2014}. The
staircase and leap framework of Abbe et
al.~\cite{AbbeEtAl2022,AbbeEtAl2023} uses this Fourier support to describe
hierarchical structure in scalar-valued functions. Separately, Möbius-based
representations have been used to identify and recover higher-order
interactions among input variables~\cite{KangEtAl2024}. Our setting concerns
vector-valued polynomial maps over \(\mathbb F_2\), for which the ANF directly
records interactions among the basis and operand variables.

ANF support and Fourier support encode different objects, so the Fourier
definition of leap does not transfer directly to our setting. The ordering
idea can still be used. Given an ordering of the supported monomials, we
record how many variables in each monomial have not appeared earlier in the
ordering. For a vector-valued Boolean map, we apply this construction to the
joint ANF support. A monomial is included once if it appears in at least one
output coordinate.

To the best of our knowledge, this ANF based, vector-valued adaptation of leap has not been studied previously. We call the resulting measure the \emph{joint ANF leap}. %It describes how new variables enter the joint ANF support under an optimal ordering. The learning guarantees established in the Fourier setting do not automatically apply to this new measure. 
Its relationship with the observed learning
behavior is examined separately in Section~\ref{sec:experiments}.

\begin{definition}[Joint ANF leap]
Let
\[
G:\mathbb F_2^d\longrightarrow\mathbb F_2^m
\]
be a nonzero map, and let
\[
q=|\mathcal A(G)|.
\]
For an ordering
\[
A_1,\ldots,A_q
\]
of the sets in $\mathcal A(G)$, the number of variables introduced for the
first time at step $t$ is
\[
\left|
A_t\setminus\bigcup_{s<t}A_s
\right|.
\]
The \emph{joint ANF leap} of $G$ is
\[
L_{\mathrm{ANF}}^{\mathrm{joint}}(G)
=
\min_{(A_1,\ldots,A_q)}
\max_{1\leq t\leq q}
\left|
A_t\setminus\bigcup_{s<t}A_s
\right|,
\]
where the minimum is taken over all orderings of $\mathcal A(G)$.
\end{definition}

To determine which monomials belong to the joint ANF support, we use the
standard ANF coefficient formula. For
$A\subseteq\{0,\ldots,d-1\}$, the coefficient vector of the monomial
\[
\prod_{i\in A}x_i
\]
is
\[
\mathbf c_A
=
\bigoplus_{C\subseteq A}G(\mathbf 1_C),
\]
where $\mathbf 1_C\in\mathbb F_2^d$ is the indicator vector of $C$, and the
XOR is taken componentwise in $\mathbb F_2^m$. Hence,
\[
A\in\mathcal A(G)
\quad\Longleftrightarrow\quad
\bigoplus_{C\subseteq A}G(\mathbf 1_C)\neq\mathbf 0.
\]

We now compute the joint ANF leap for the same three formulations
considered in the degree analysis. We begin with the reference
inversion map, which provides the baseline for the
mixed representation and complete raw formulations.

\begin{theorem}[Joint ANF leap of reference inversion]
\label{thm:leap1}
For every \(n\ge2\),
\[
L_{\mathrm{ANF}}^{\mathrm{joint}}(J_n)=1.
\]
\end{theorem}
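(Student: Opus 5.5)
Our plan is to prove the two inequalities \(L_{\mathrm{ANF}}^{\mathrm{joint}}(J_n)\ge1\) and \(L_{\mathrm{ANF}}^{\mathrm{joint}}(J_n)\le1\) separately.

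The lower bound is immediate. Since \(n\ge2\), \(J_n\) is not the zero map; for instance \(J_n([1]_E)=[1]_E\neq\mathbf 0\). Also \(J_n(\mathbf 0)=\mathbf 0\), so the constant monomial has coefficient \(\mathbf c_\emptyset=J_n(\mathbf 0)=\mathbf 0\) and \(\emptyset\notin\mathcal A(J_n)\). Hence every set in the joint support is nonempty. In any ordering, the first set \(A_1\) therefore introduces \(|A_1|\ge1\) new variables, and the leap is at least \(1\).

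For the upper bound we will show that \(\mathcal A(J_n)\) is closed under taking nonempty subsets, that is, it is a down-closed family. Given this, ordering the supported sets by nondecreasing cardinality gives leap at most \(1\). Indeed, for \(A_t\) with \(|A_t|=k\ge2\), each subset \(A_t\setminus\{i\}\) is a nonempty supported set of size \(k-1\) and so appears earlier. The union of these earlier sets covers \(A_t\), so step \(t\) introduces no new variables. Singletons introduce exactly one new variable.

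The main obstacle is establishing this down-closedness, and it is where the specific algebra of inversion enters. The first approach we would try is to describe the support of \(x\mapsto x^{2^n-2}\) explicitly. Write \(x=\sum_i x_i\alpha^i\) and expand \(x^{2^n-2}=\prod_{k=1}^{n-1}x^{2^k}\). Since \(x^{2^k}=\sum_i x_i\alpha^{i2^k}\) is linear in the bits, the product equals
\[
\sum_{(i_1,\ldots,i_{n-1})}x_{i_1}\cdots x_{i_{n-1}}\,\alpha^{\sum_k i_k2^k}.
\]
Reducing \(x_i^2=x_i\), the coefficient of \(\prod_{i\in A}x_i\) is a field element given by a sum over all surjections \(\{1,\ldots,n-1\}\to A\).

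We would then need to show the following: if this coefficient is nonzero for \(A\), then it is also nonzero for \(A\setminus\{i\}\). This looks hard to do directly.

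A fallback is to aim only at what the ordering argument needs. We only need, for each supported \(A\) with \(|A|\ge2\), some ordering in which \(A\) has at most one fresh variable. For this it suffices to know two things:
\begin{itemize}
\item every singleton \(\{i\}\) is supported;
\item every pair of indices lies in a common supported set of smaller size than some later set, or, more simply, the variables are all introduced before any larger sets.
\end{itemize}
So we would instead prove that every singleton \(\{i\}\) lies in \(\mathcal A(J_n)\), and list all singletons first. Then every variable has appeared before any set of size at least \(2\), and those sets contribute \(0\) new variables. This gives leap exactly \(1\).

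Singleton support is much easier to check. By the coefficient formula,
\[
\mathbf c_{\{i\}}=J_n(\mathbf 0)\oplus J_n(\mathbf 1_{\{i\}})=[\alpha^{-i}]_E,
\]
which is nonzero because \(\alpha^{-i}\neq0\). So the decisive step reduces to this one computation. The earlier down-closed argument is unnecessary, and we will present the proof in the singleton-first form.
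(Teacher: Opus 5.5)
Your final singleton-first argument is correct and is essentially the paper's proof. The lower bound comes from \(J_n(\mathbf 0)=\mathbf 0\) together with \(J_n\neq 0\), and the upper bound from \(\mathbf c_{\{i\}}=J_n(\mathbf e_i)=[\alpha^{-i}]_E\neq\mathbf 0\), listing all singletons first so that every later support set introduces no new variables.
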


\begin{proof}
For each $j\in\{0,\ldots,n-1\}$, the coefficient vector corresponding to the
singleton monomial $x_j$ is
\[
\mathbf c_{\{j\}}
=
J_n(\mathbf 0)\oplus J_n(\mathbf e_j)
=
J_n(\mathbf e_j),
\]
where $\mathbf e_j$ is the $j$th standard basis vector of $V$.

Since $\mathbf e_j\neq\mathbf 0$, it represents a nonzero element of $F$.
Its inverse is also nonzero, and hence
\[
J_n(\mathbf e_j)\neq\mathbf 0.
\]
It follows that
\[
\{j\}\in\mathcal A(J_n)
\qquad
\text{for every }j.
\]

We may order the joint ANF support by placing the singleton sets first,
\[
\{0\},\{1\},\ldots,\{n-1\},
\]
followed by all remaining support sets in any order. Each singleton introduces
exactly one new variable. After these $n$ sets have been listed, every input
variable has already appeared, so each remaining support set introduces no
new variables. Thus,
\[
L_{\mathrm{ANF}}^{\mathrm{joint}}(J_n)\leq1.
\]

On the other hand, $J_n(\mathbf0)=\mathbf0$ and $J_n$ is nonzero. Hence, its
joint ANF support contains a nonempty set. The first support set in any
ordering introduces at least one variable, so
\[
L_{\mathrm{ANF}}^{\mathrm{joint}}(J_n)\geq1.
\]
Combining the two bounds gives
\[
L_{\mathrm{ANF}}^{\mathrm{joint}}(J_n)=1.
\]
\end{proof}

We next consider the mixed representation formulation
$J_n(P\mathbf u)$. Including the basis transformation at the input stage
changes not only the algebraic degree but also the way new variables enter
the joint ANF support.

\begin{theorem}[Joint ANF leap of the mixed representation formulation]
\label{thm:leap2}
For every \(n\ge2\),
\[
L_{\mathrm{ANF}}^{\mathrm{joint}}
\bigl(J_n(P\mathbf u)\bigr)
=
2.
\]
\end{theorem}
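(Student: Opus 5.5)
Two bounds are needed: the leap is at least $2$ and at most $2$.

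\emph{Lower bound.} Let $G(P,\mathbf u)=J_n(P\mathbf u)$. Every monomial in the joint ANF support of $G$ contains at least one $p$-variable and at least one $u$-variable. To see this, substitute $\mathbf x=P\mathbf u$ into the ANF of $J_n$. The ANF of $J_n$ has zero constant term because $J_n(\mathbf 0)=\mathbf 0$, so every monomial of $J_n$ has degree at least $1$ in $\mathbf x$. Each factor $(P\mathbf u)_i=\sum_j p_{ij}u_j$ is a sum of products $p_{ij}u_j$, so after expansion and multilinear reduction every surviving monomial is a product of such terms. It therefore contains both a $p$-variable and a $u$-variable. Equivalently, $G$ vanishes when $P=0$ or $\mathbf u=\mathbf 0$, and the coefficient formula $\mathbf c_A=\bigoplus_{C\subseteq A}G(\mathbf 1_C)$ then gives $\mathbf c_A=\mathbf 0$ whenever $A$ contains no $p$-variable or no $u$-variable. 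Consequently the first set in any ordering of the support introduces at least two new variables, so the leap is at least $2$. The support is nonempty because $G$ is nonzero, for example $G(I,\mathbf e_0)=J_n(\mathbf e_0)\neq\mathbf 0$.

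\emph{Upper bound.} We exhibit an ordering in which every step introduces at most two new variables. The natural starting sets are the degree-$2$ monomials $\{p_{ij},u_j\}$. Their coefficient vector is, by the coefficient formula,
\[
G(\mathbf 1_{\{p_{ij},u_j\}})\oplus G(\mathbf 1_{\{p_{ij}\}})\oplus G(\mathbf 1_{\{u_j\}})\oplus G(\mathbf 0)=J_n(\mathbf e_i),
\]
since the last three terms vanish (they have $P=0$ or $\mathbf u=\mathbf 0$) and $E_{ij}\mathbf e_j=\mathbf e_i$. This vector is nonzero because $\mathbf e_i$ represents a nonzero field element whose inverse is nonzero. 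Hence all $n^2$ sets $\{p_{ij},u_j\}$ lie in $\mathcal A(G)$.

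Order them as $\{p_{00},u_0\},\{p_{10},u_0\},\dots$ and continue through all pairs. The first set introduces two new variables. Each later set introduces one new $p$-variable, plus $u_j$ if $u_j$ has not yet appeared, so at most two new variables. Once all $n^2$ pairs are listed, every variable has appeared, and the remaining support sets may be appended in any order, each introducing nothing new. This gives leap at most $2$, and together with the lower bound, the leap equals $2$.

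The step most likely to need care is the lower bound's claim that no support monomial is purely in $P$ or purely in $\mathbf u$. Arguing through substitution can be muddied by cancellations under multilinear reduction ($x^2=x$). The coefficient-formula argument above avoids this: if $A$ contains only $p$-variables, every $\mathbf 1_C$ with $C\subseteq A$ has $\mathbf u=\mathbf 0$, so $G(\mathbf 1_C)=\mathbf 0$ and $\mathbf c_A=\mathbf 0$. The case where $A$ contains only $u$-variables is symmetric, since then $P=0$. This cleanly establishes the key structural fact.
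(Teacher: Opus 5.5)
Your proposal is correct and follows essentially the same route as the paper. You use the vanishing of $G$ when $P=0$ or $\mathbf u=\mathbf 0$, via the coefficient formula, for the lower bound. For the upper bound you show that the $n^2$ sets $\{p_{ij},u_j\}$ lie in $\mathcal A(G)$ with coefficient $J_n(\mathbf e_i)\neq\mathbf 0$ and order them column by column.
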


\begin{proof}
Let
\[
G:\mathbb F_2^{n\times n}\times V\longrightarrow V
\]
be defined by
\[
G(P,\mathbf u)=J_n(P\mathbf u).
\]
We consider the entries of $P$ and the coordinates of $\mathbf u$ jointly as
$n^2+n$ Boolean input variables.

We first prove the lower bound. Since
\[
G(P,\mathbf0)=J_n(\mathbf0)=\mathbf0
\]
for every $P$, the joint ANF support of $G$ contains no set consisting
only of variables from $P$. Similarly,
\[
G(0,\mathbf u)=J_n(\mathbf0)=\mathbf0
\]
for every $\mathbf u$, so the joint ANF support contains no set consisting
only of variables from $\mathbf u$.

Consequently, every set in $\mathcal A(G)$ contains at least one variable
$p_{ij}$ and at least one variable $u_k$. The first support set in any
ordering of $\mathcal A(G)$ introduces at least two variables. Hence,
\[
L_{\mathrm{ANF}}^{\mathrm{joint}}(G)\geq2.
\]

For the upper bound, let $E_{ij}$ denote the matrix whose $(i,j)$ entry is
one and whose remaining entries are zero, and let $\mathbf e_k$ denote the
$k$th standard basis vector of $V$.

By the ANF coefficient formula, the coefficient vector of the monomial
$p_{ij}u_k$ in $G$ is
\[
\begin{aligned}
\mathbf c_{\{p_{ij},u_k\}}
={}&
G(0,\mathbf0)
\oplus G(E_{ij},\mathbf0)
\oplus G(0,\mathbf e_k)
\oplus G(E_{ij},\mathbf e_k).
\end{aligned}
\]
The first three terms are zero. Moreover,
\[
E_{ij}\mathbf e_k
=
\begin{cases}
\mathbf e_i, & k=j,\\
\mathbf0, & k\neq j.
\end{cases}
\]
It follows that
\[
\mathbf c_{\{p_{ij},u_k\}}
=
\begin{cases}
J_n(\mathbf e_i), & k=j,\\
\mathbf0, & k\neq j.
\end{cases}
\]
Since $J_n(\mathbf e_i)\neq\mathbf0$, we obtain
\[
\{p_{ij},u_j\}\in\mathcal A(G)
\qquad
\text{for all }0\leq i,j\leq n-1.
\]

We order these support sets first, grouping them by their column index $j$:
\[
\begin{aligned}
&\{p_{00},u_0\},\{p_{10},u_0\},\ldots,
  \{p_{n-1,0},u_0\},\\
&\{p_{01},u_1\},\{p_{11},u_1\},\ldots,
  \{p_{n-1,1},u_1\},\\
&\hspace{3cm}\vdots\\
&\{p_{0,n-1},u_{n-1}\},\ldots,
  \{p_{n-1,n-1},u_{n-1}\}.
\end{aligned}
\]
For each $j$, the first set in the $j$th group introduces the two variables
$p_{0j}$ and $u_j$. Every subsequent set in the group introduces only one
new variable, namely $p_{ij}$.

After these sets have been listed, all entries of $P$ and all coordinates of
$\mathbf u$ have appeared. Every remaining set in $\mathcal A(G)$ therefore
introduces no new variables. This ordering introduces at most two new
variables at each step, so
\[
L_{\mathrm{ANF}}^{\mathrm{joint}}(G)\leq2.
\]

Combining the two bounds gives
\[
L_{\mathrm{ANF}}^{\mathrm{joint}}
\bigl(J_n(P\mathbf u)\bigr)=2.
\]
\end{proof}

We finally arrive at the  complete raw formulation $f_{\mathrm{raw}}$, which
also converts the output through $\operatorname{adj}(P)$. For this
formulation, the joint ANF leap grows at least linearly with the field
dimension. Unlike the preceding two results, which give exact values, the
following result establishes a lower bound.

\begin{theorem}[Lower bound on the joint ANF leap of the complete raw
formulation]
\label{thm:leap-raw}
For every $n\geq2$, each set
$A\in\mathcal A(f_{\mathrm{raw}})$ satisfies
\[
|A|\geq n.
\]
Equivalently, every monomial appearing in the joint ANF support of
\[
f_{\mathrm{raw}}(P,\mathbf u)
=
\operatorname{adj}(P)J_n(P\mathbf u)
\]
has degree at least $n$. Consequently,
\[
L_{\mathrm{ANF}}^{\mathrm{joint}}(f_{\mathrm{raw}})
\geq n.
\]
\end{theorem}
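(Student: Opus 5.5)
The plan is to expand $f_{\mathrm{raw}}(P,\mathbf u)=\operatorname{adj}(P)J_n(P\mathbf u)$ formally as a polynomial in the $n^2+n$ variables and reduce it to multilinear form. Then I would check that \emph{every} monomial produced before cancellation already involves at least $n$ distinct variables. Since the ANF is obtained from this formal expansion only by the reduction $x^2=x$ and by cancellation of equal monomials over $\mathbb F_2$, each set in $\mathcal A(f_{\mathrm{raw}})$ is the variable set of some such monomial. It therefore inherits the bound $|A|\ge n$.

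\emph{Step 1 (structure of $\operatorname{adj}(P)$).} Each entry of $\operatorname{adj}(P)$ is, up to sign (irrelevant over $\mathbb F_2$), an $(n-1)\times(n-1)$ minor of $P$. By the Leibniz expansion, it is a sum of monomials $\prod_{k=1}^{n-1}p_{i_k j_k}$ in which the row indices $i_k$ are distinct and the column indices $j_k$ are distinct. Hence each such monomial is already multilinear and consists of exactly $n-1$ distinct $P$-variables.

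\emph{Step 2 (structure of $J_n(P\mathbf u)$).} Each coordinate of $J_n$ has ANF $\bigoplus_{A}c_A\prod_{i\in A}x_i$. The constant term vanishes because $J_n(\mathbf 0)=\mathbf 0$, so every monomial has $A\neq\emptyset$. Substituting $x_i=\sum_j p_{ij}u_j$ and expanding, each resulting monomial is a product of at least one factor $p_{ij}u_j$. In particular, every monomial in the formal expansion of $J_n(P\mathbf u)$ contains at least one $u$-variable.

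\emph{Step 3 (combining).} Each coordinate of $f_{\mathrm{raw}}$ is $\sum_k \operatorname{adj}(P)_{\ell k}\,J_n(P\mathbf u)_k$. A formal monomial of this product is $m_1m_2$, where $m_1$ comes from Step 1 and $m_2$ from Step 2. Its variable set is $\operatorname{var}(m_1)\cup\operatorname{var}(m_2)$. This union contains the $n-1$ distinct $P$-variables of $m_1$ and at least one $u$-variable from $m_2$, which is necessarily disjoint from them. Multilinear reduction preserves the variable set, so the reduced monomial has degree at least $n$. Cancellation modulo $2$ can only delete monomials and never creates new ones. Hence every $A\in\mathcal A(f_{\mathrm{raw}})$ satisfies $|A|\ge n$.

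\emph{Step 4 (leap).} I still need $\mathcal A(f_{\mathrm{raw}})\neq\emptyset$, so that the leap is defined. This holds because $f_{\mathrm{raw}}(P_E,\mathbf u)=J_n(\mathbf u)\ne\mathbf 0$ for $\mathbf u\ne\mathbf 0$. In any ordering of $\mathcal A(f_{\mathrm{raw}})$, the first set $A_1$ has no predecessors, so it introduces $|A_1|\ge n$ new variables, giving $L^{\mathrm{joint}}_{\mathrm{ANF}}(f_{\mathrm{raw}})\ge n$.

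The only delicate point is the justification in Step 3 that ANF support sets are contained among the variable sets of the formal expansion. This follows from uniqueness of the multilinear representation: reducing the formal expansion yields the ANF. I expect this to be straightforward, so no real obstacle remains beyond stating it carefully.
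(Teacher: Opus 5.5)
Your proof is correct, but it reaches the bound by formal expansion, whereas the paper argues through function evaluations.

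The paper uses the Möbius formula $\mathbf c_A=\bigoplus_{C\subseteq A}f_{\mathrm{raw}}(\mathbf 1_C)$ and shows that $f_{\mathrm{raw}}$ vanishes at every input of Hamming weight less than $n$. If at most $n-2$ entries of $P_C$ are set, then $\operatorname{rank}(P_C)\le n-2$, so $\operatorname{adj}(P_C)=0$. Otherwise exactly $n-1$ matrix entries are set, which forces $\mathbf u_C=\mathbf 0$ and hence $J_n(\mathbf 0)=\mathbf 0$.

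You instead expand $\operatorname{adj}(P)J_n(P\mathbf u)$ formally. Every formal monomial is a Leibniz term with $n-1$ distinct matrix variables, multiplied by a product containing at least one $u$-variable. You then pass to the ANF by uniqueness of the multilinear representative.

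The two arguments are dual. Vanishing on all inputs of weight below $n$ is equivalent to having no ANF monomial of degree below $n$. The fact that $\operatorname{rank}(P)\le n-2$ forces $\operatorname{adj}(P)=0$ plays the role of your Leibniz observation.

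What each route buys:
\begin{itemize}
\item The paper's route needs only function values and one linear-algebra fact. It involves no bookkeeping about multilinear reduction or cancellation.
\item Your route gives more structure from a single computation. Each set in $\mathcal A(f_{\mathrm{raw}})$ contains $n-1$ matrix variables in distinct rows and distinct columns, together with at least one $u$-variable. This is consistent with the certificate sets listed in the appendix.
\item The same expansion also yields the upper bound $\deg(f_{\mathrm{raw}})\le 3(n-1)$, so both degree statements come out of one argument.
\end{itemize}
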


\begin{proof}
Let $A$ be a set of input
variables satisfying
\[
|A|<n.
\]
We show that its coefficient vector $\mathbf c_A$ is zero, and hence that
$A\notin\mathcal A(f_{\mathrm{raw}})$.

By the ANF coefficient formula,
\[
\mathbf c_A
=
\bigoplus_{C\subseteq A}
f_{\mathrm{raw}}(\mathbf1_C).
\]

Fix $C\subseteq A$. The indicator assignment $\mathbf1_C$ determines a
matrix $P_C$ and a vector $\mathbf u_C$ given by
\[
(P_C)_{ij}
=
\begin{cases}
1, & p_{ij}\in C,\\
0, & p_{ij}\notin C,
\end{cases}
\qquad
(\mathbf u_C)_k
=
\begin{cases}
1, & u_k\in C,\\
0, & u_k\notin C.
\end{cases}
\]
Thus, we identify $\mathbf1_C$ with the pair $(P_C,\mathbf u_C)$.

Let $k_P$ and $k_u$ denote the numbers of matrix variables and vector
variables contained in $C$, respectively. Then
\[
k_P+k_u=|C|<n.
\]

If $k_P\leq n-2$, the matrix $P_C$ has at most $n-2$ nonzero entries. Hence,
\[
\operatorname{rank}(P_C)\leq n-2,
\]
which implies
\[
\operatorname{adj}(P_C)=0.
\]
It follows that
\[
f_{\mathrm{raw}}(P_C,\mathbf u_C)
=
\operatorname{adj}(P_C)J_n(P_C\mathbf u_C)
=
\mathbf0.
\]

The only remaining possibility is $k_P=n-1$. Since
$k_P+k_u<n$, we must have $k_u=0$, and hence
\[
\mathbf u_C=\mathbf0.
\]
Consequently,
\[
J_n(P_C\mathbf u_C)
=
J_n(\mathbf0)
=
\mathbf0,
\]
and again
\[
f_{\mathrm{raw}}(P_C,\mathbf u_C)=\mathbf0.
\]

Every term in the coefficient formula is therefore zero, so
\[
\mathbf c_A=\mathbf0.
\]
Thus,
\[
A\notin\mathcal A(f_{\mathrm{raw}})
\qquad
\text{whenever }|A|<n.
\]

Finally, $f_{\mathrm{raw}}$ is nonzero because
\[
f_{\mathrm{raw}}(I,\mathbf u)=J_n(\mathbf u)
\]
and $J_n$ is nonzero. The first support set in any ordering of
$\mathcal A(f_{\mathrm{raw}})$ therefore contains at least $n$ variables.
Hence,
\[
L_{\mathrm{ANF}}^{\mathrm{joint}}(f_{\mathrm{raw}})
\geq n.
\]
\end{proof}

The results of this section give the following degree and joint ANF leap
values and bounds for the three formulations:
\[
\begin{aligned}
\text{algebraic degree:}\qquad
& n-1
\;\longrightarrow\;
2(n-1)
\;\longrightarrow\;
\leq 3(n-1),\\
\text{joint ANF leap:}\qquad
& 1
\;\longrightarrow\;
2
\;\longrightarrow\;
\geq n.
\end{aligned}
\]
From the reference formulation to the mixed representation formulation, both
the algebraic degree and the joint ANF leap increase. For the complete raw
formulation, the joint ANF leap is at least $n$, while the algebraic degree is
bounded above by $3(n-1)$.

\paragraph{Redundancy and complexity of the raw formulation.}

The contrast with the redundancy result is now precise. Let \(B\) and \(B'\)
be distinct bases in the same Galois orbit. Their basis matrices \(P_B\) and
\(P_{B'}\) are different, but they define the same function slice:
\[
f_{\mathrm{raw}}(P_B,\cdot)
=
\operatorname{Inv}_B
=
\operatorname{Inv}_{B'}
=
f_{\mathrm{raw}}(P_{B'},\cdot).
\]
Thus, each Galois orbit gives \(n\) distinct basis inputs whose associated
inversion maps are identical. This is the exact representation redundancy
established in Section~\ref{sec:math-frame}.

%Nevertheless, the complete raw formulation is not algebraically simple.
%When \(P\) and \(\mathbf u\) are treated jointly as variables, every nonzero
%monomial in its ANF has degree at least \(n\), and its joint ANF leap is at
%least \(n\).

The redundancy result and the complexity bounds describe different
properties. Redundancy compares the function slices obtained by fixing
different basis matrices, whereas structural complexity describes the full
joint map in the variables \((P,\mathbf u)\). Exact redundancy can therefore
coexist with a structurally complex raw formulation. We do not claim that
redundancy causes this complexity. Rather, the presence of redundancy alone
does not make the raw formulation algebraically simple.

This distinction motivates the experiments in the next section. We examine
whether redundancy within Galois orbits affects generalization to unseen
bases and whether the structural differences among the three formulations
are reflected in learning with finite data and model capacity.

\section{Experiments}
\label{sec:experiments}
We now examine two theoretical phenomena established in the preceding
sections. One is exact representation redundancy among bases in the same
Galois orbit. The other is the change in algebraic structure that occurs when
the basis transformation is included as part of the input. Our experimental
study combines exhaustive computation for finite cases with controlled
learning experiments that examine whether these theoretical differences are
reflected in learning performance under limited data and model capacity.

\subsection{Experimental setup}
\label{sec:experimental-setup}

\paragraph{Model.}
All learning experiments were implemented in Python using
PyTorch~\cite{PaszkeEtAl2019}. All learning experiments use a fully connected multilayer perceptron (MLP).
The model does not include Transformer layers or architectural components
designed to encode Galois symmetry. The MLP consists of three hidden layers
with ReLU activations and a final linear layer with $n$ output logits.
Experiments~1 and~2 use hidden width $256$, while Experiment~3 varies the
width as part of its scaling study.

We choose an MLP because each input is a short binary vector of fixed length
with no natural token or sequence structure that would call for a
Transformer. The MLP also receives no explicit information about the Galois
symmetry. This allows the comparison to focus on the effect of the
formulation without introducing an architectural bias designed around the
symmetry. Finally, a fully connected network is closer in spirit to the
models considered in the staircase and leap literature. Our architecture and
training procedure are not identical to those theoretical settings. Moreover, we do not claim that their learning guarantees apply directly here.

\paragraph{Input and output encoding.}
No tokenization or learned embedding is used. Each input consists of a basis
matrix
\[
P=(p_{ij})\in\operatorname{GL}_n(\mathbb F_2)
\]
and an operand
\[
\mathbf u=(u_0,\ldots,u_{n-1})^{\mathsf T}\in\mathbb F_2^n.
\]
We flatten $P$ in row major order and concatenate it with $\mathbf u$:
\[
\mathbf x(P,\mathbf u)
=
\bigl(
p_{00},p_{01},\ldots,p_{n-1,n-1},
u_0,\ldots,u_{n-1}
\bigr)^{\mathsf T}
\in\{0,1\}^{n^2+n}.
\]
The binary entries are supplied directly to the MLP as floating point values.
The input dimension is $12$ for $n=3$ and $20$ for $n=4$. The output consists
of $n$ logits, one for each coordinate of the target vector specified by the
corresponding formulation.

\paragraph{Training and evaluation.}
The model is trained using coordinatewise binary cross-entropy with logits.
We use AdamW~\cite{LoshchilovHutter2019} with learning rate $10^{-3}$ and parameters
\[
(\beta_1,\beta_2)=(0.9,0.999),\qquad
\epsilon=10^{-8},\qquad
\text{weight decay}=10^{-2},
\]
together with batches of size at most $512$. We use no learning rate
scheduler, warmup, early stopping, or gradient clipping. Model widths,
training budgets, splits into training and test sets, and numbers of random
seeds are given in the corresponding experiment subsections.

We report bit accuracy and exact accuracy. Bit accuracy is the proportion of
individual output coordinates predicted correctly. Exact accuracy counts an
example as correct only when all $n$ output bits are correct and is used as
the primary metric. Reported deviations are sample standard deviations across
random seeds.

\subsection{Experiment 0: Exact ANF computation}
\label{sec:exp0}

\paragraph{Purpose and design.}
This experiment verifies the ANF structure of the reference, mixed
representation, and complete raw formulations for $n=3$ and $n=4$. We
evaluate the reference map
\[
J_n(\mathbf u)
\]
over $\mathbb F_2^n$, and the other two maps
\[
J_n(P\mathbf u)
\qquad\text{and}\qquad
f_{\mathrm{raw}}(P,\mathbf u)
=
\operatorname{adj}(P)J_n(P\mathbf u)
\]
over
\[
\mathbb F_2^{n\times n}\times\mathbb F_2^n.
\]
The full Boolean matrix space is used because the ANF is defined on the
complete Boolean domain. This includes both invertible and singular values
of $P$ and agrees with the polynomial extension used in the definition of
$f_{\mathrm{raw}}$.

All input assignments are evaluated. The reference formulation requires $8$
evaluations for $n=3$ and $16$ for $n=4$. Each of the other two formulations
requires $2^{12}=4096$ evaluations for $n=3$ and
$2^{20}=1{,}048{,}576$ for $n=4$. The six computations contain
$2{,}105{,}368$ input evaluations in total.

For each output coordinate, we recover the ANF from its complete truth table
using the Boolean Möbius transform. We then compute the algebraic degree,
minimum positive monomial degree, joint ANF leap, and total coordinate-wise
monomial count
\[
N(G)
=
\sum_{k=0}^{m-1}
\left|
\operatorname{supp}_{\mathrm{ANF}}(G_k)
\right|.
\]
A monomial appearing in several output coordinates is counted once for each
coordinate in which it appears. No neural network training is used in this
experiment.

\paragraph{Results.}
The exact ANF statistics are reported in
Table~\ref{tab:exact-anf}.

\begin{table}[t]
\centering
\caption{Exact ANF statistics for the three formulations. Here \(N(G)\)
counts ANF monomials across output coordinates with multiplicity;
\(|\mathcal A(G)|\) is reported in
Table~\ref{tab:leap-certificates}.}
\label{tab:exact-anf}
\begin{tabular}{llrrrr}
\toprule
\(n\) & Formulation & Degree & Min. positive degree & Joint leap & \(N(G)\) \\
\midrule
3 & reference            & 2 & 1 & 1 & 9    \\
3 & mixed representation & 4 & 2 & 2 & 45   \\
3 & complete raw         & 6 & 3 & 3 & 168  \\
4 & reference            & 3 & 1 & 1 & 27   \\
4 & mixed representation & 6 & 2 & 2 & 552  \\
4 & complete raw         & 9 & 4 & 4 & 7008 \\
\bottomrule
\end{tabular}
\end{table}

\paragraph{Analysis.}
The exact values in Table~\ref{tab:exact-anf} agree with the theoretical
results of the preceding section. For both $n=3$ and $n=4$, the complete raw
formulation reaches the upper bound
\[
\deg(f_{\mathrm{raw}})=3(n-1)
\]
and the lower bound
\[
L_{\mathrm{ANF}}^{\mathrm{joint}}(f_{\mathrm{raw}})=n.
\]
Thus, both bounds are sharp for the two field sizes considered. The minimum
positive monomial degree and the total coordinate-wise monomial count $N(G)$
also increase across the three formulations.

These computations suggest that
\[
\deg(f_{\mathrm{raw}})=3(n-1),
\qquad
L_{\mathrm{ANF}}^{\mathrm{joint}}(f_{\mathrm{raw}})=n
\]
may hold for general $n$, although the cases $n=3$ and $n=4$ do not prove
either equality. Additional exact ANF degree profiles and coordinate
statistics are provided in Appendix~\ref{app:anf-details}.

\subsection{Experiment 1: Learning across the three formulations}
\label{sec:exp1}
\paragraph{Purpose and design.}
This experiment compares learning across the reference, mixed representation,
and complete raw formulations under matched conditions. A direct comparison
would create a domain mismatch. The reference map
\[
J_n:\mathbb F_2^n\longrightarrow\mathbb F_2^n
\]
has $2^n$ inputs, while the other two formulations take both a basis matrix
$P$ and an operand $\mathbf u$ as input. The input dimension, dataset size,
and split into training and test sets would therefore differ across the three
formulations.

To place all three maps on a common input domain, we lift the reference map by
defining
\[
\widetilde J_n(P,\mathbf u)=J_n(\mathbf u).
\]
The matrix $P$ is included as an input but does not affect the output. The
following proposition shows that this lifting preserves the ANF structure of
the reference map.

\begin{proposition}[ANF invariance under lifting]
\label{prop:lifted-reference}
Let $n\geq2$, and define
\[
\widetilde J_n:
\mathbb F_2^{n\times n}\times\mathbb F_2^n
\longrightarrow
\mathbb F_2^n
\]
by
\[
\widetilde J_n(P,\mathbf u)=J_n(\mathbf u).
\]
Then $\widetilde J_n$ and $J_n$ have the same algebraic degree, minimum
positive monomial degree, and degree profile. Under the natural
identification of the variables $u_0,\ldots,u_{n-1}$,
\[
\mathcal A(\widetilde J_n)=\mathcal A(J_n),
\qquad
N(\widetilde J_n)=N(J_n).
\]
They also have the same joint ANF leap. In particular,
\[
\deg(\widetilde J_n)=n-1,
\qquad
L_{\mathrm{ANF}}^{\mathrm{joint}}(\widetilde J_n)=1.
\]
\end{proposition}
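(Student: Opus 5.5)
The plan is to show that the ANF of \(\widetilde J_n\) is obtained from that of \(J_n\) by renaming variables, with no monomial involving any \(p_{ij}\). Every listed invariant depends only on the collection of supported monomials and their coefficient vectors, so they all transfer at once. Write the coordinate ANFs of \(J_n\) as
\[
J_n(\mathbf u)=\bigoplus_{A\subseteq\{u_0,\ldots,u_{n-1}\}}\mathbf c_A\prod_{u_k\in A}u_k .
\]
Then the same polynomial, read as a polynomial in the \(n^2+n\) variables \((P,\mathbf u)\), is multilinear and agrees with \(\widetilde J_n\) on every input. By uniqueness of the ANF (Definition~\ref{def:anf-degree}), it is the ANF of \(\widetilde J_n\).

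As a cross-check I would also use the ANF coefficient formula directly. Take any set \(A\) of input variables containing some \(p_{ij}\). The subsets \(C\subseteq A\) pair off as \(C\) and \(C\triangle\{p_{ij}\}\), and \(\widetilde J_n\) takes the same value on both members of each pair, since it ignores \(P\). Hence \(\mathbf c_A=\mathbf 0\). For \(A\subseteq\{u_0,\ldots,u_{n-1}\}\), the formula reduces verbatim to the coefficient formula for \(J_n\).

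It follows that, under the identification of the \(u\)-variables, each coordinate \(\widetilde J_{n,k}\) has exactly the ANF support of \(J_{n,k}\). Consequently \(\mathcal A(\widetilde J_n)=\mathcal A(J_n)\), and \(N(\widetilde J_n)=N(J_n)\) because \(N\) sums the coordinatewise support sizes. The algebraic degree, the minimum positive monomial degree, and the degree profile (the number of supported monomials of each degree, per coordinate or jointly) are all functions of the coordinatewise supports together with monomial sizes. Renaming preserves sizes, so these agree as well.

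For the joint ANF leap, the definition depends only on the set system \(\mathcal A(G)\) through the quantities \(|A_t\setminus\bigcup_{s<t}A_s|\). These are invariant under the injective renaming of variables. The variables \(p_{ij}\) never occur in any support set, so they never count as newly introduced. The minimum over orderings is therefore identical, and Theorem~\ref{thm:leap1} gives \(L_{\mathrm{ANF}}^{\mathrm{joint}}(\widetilde J_n)=1\). Theorem~\ref{thm:degree-reference-inversion} gives \(\deg(\widetilde J_n)=n-1\).

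The only real subtlety is being explicit that the leap is not affected by the extra unused variables. There is no requirement in the definition that every input variable be introduced, so no step is charged for them. I would state this point explicitly; otherwise the argument is routine.
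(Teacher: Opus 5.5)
Your proposal is correct and follows essentially the same route as the paper. Both arguments observe that the ANF of $\widetilde J_n$ is the ANF of $J_n$ read in the enlarged variable set, with no $p_{ij}$ occurring. From this, the support-based invariants and the joint ANF leap (via the correspondence of orderings) transfer, and the stated values follow from Theorems~\ref{thm:degree-reference-inversion} and~\ref{thm:leap1}. Your explicit appeal to uniqueness of the ANF and the Möbius pairing cross-check simply justify a step the paper asserts directly.
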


\begin{proof}
Write the $k$th coordinate ANF of $J_n$ as
\[
J_{n,k}(\mathbf u)
=
\bigoplus_{S\subseteq\{0,\ldots,n-1\}}
c_{k,S}\prod_{i\in S}u_i.
\]
By definition,
\[
\widetilde J_{n,k}(P,\mathbf u)=J_{n,k}(\mathbf u).
\]
Hence, the ANF of $\widetilde J_{n,k}$ contains exactly the same monomials as
that of $J_{n,k}$, and no monomial contains a matrix variable $p_{ij}$. When
the support of $J_n$ is viewed in the enlarged variable set, all coordinate
ANF supports remain unchanged. The algebraic degree, minimum positive
monomial degree, and degree profile are therefore preserved, and
\[
\mathcal A(\widetilde J_n)=\mathcal A(J_n),
\qquad
N(\widetilde J_n)=N(J_n).
\]

Every ordering of the sets in $\mathcal A(J_n)$ gives an ordering of
$\mathcal A(\widetilde J_n)$ with the same number of new variables at each
step, and conversely. Thus,
\[
L_{\mathrm{ANF}}^{\mathrm{joint}}(\widetilde J_n)
=
L_{\mathrm{ANF}}^{\mathrm{joint}}(J_n).
\]
The stated values follow from
\[
\deg(J_n)=n-1,
\qquad
L_{\mathrm{ANF}}^{\mathrm{joint}}(J_n)=1.
\]
\end{proof}

We then compare the three maps
\begin{align}
f_{\mathrm{ref}}(P,\mathbf u)
    &=\widetilde J_n(P,\mathbf u)=J_n(\mathbf u),\\
f_{\mathrm{mix}}(P,\mathbf u)
    &=J_n(P\mathbf u),\\
f_{\mathrm{raw}}(P,\mathbf u)
    &=P^{-1}J_n(P\mathbf u)
\end{align}
on the common experimental domain
\[
\operatorname{GL}_n(\mathbb F_2)\times\mathbb F_2^n.
\]
Thus, all three formulations use the same input dimension, input pairs,
splits into training and test sets, model architecture, and optimization
budget. The ANF statement in
Proposition~\ref{prop:lifted-reference} is made on the full ambient Boolean
domain, whereas the learning experiment restricts $P$ to invertible matrices.

For each $n\in\{3,4\}$, we randomly divide the complete set of input pairs
$(P,\mathbf u)$ into $75\%$ training and $25\%$ test examples. For each random
seed, the same split indices are used for all three formulations. For $n=3$,
the domain contains $1344$ examples, giving $1008$ training examples and
$336$ test examples. For $n=4$, the domain contains $322{,}560$ examples,
giving $241{,}920$ training examples and $80{,}640$ test examples.

All models have three ReLU hidden layers of width $256$ and are trained for
$1000$ optimization steps. Each condition is repeated with five random
seeds, giving $3\times2\times5=30$ training runs in total. The remaining
model and optimization settings are those described in
Section~\ref{sec:experimental-setup}.

\paragraph{Results.}
Figure~\ref{fig:formulation-learning-curves} shows the held-out exact
accuracy throughout training for all five random seeds.
Table~\ref{tab:formulation-comparison} reports the final exact and bit
accuracies.

\begin{figure}[t]
\centering
\includegraphics[width=\textwidth]
{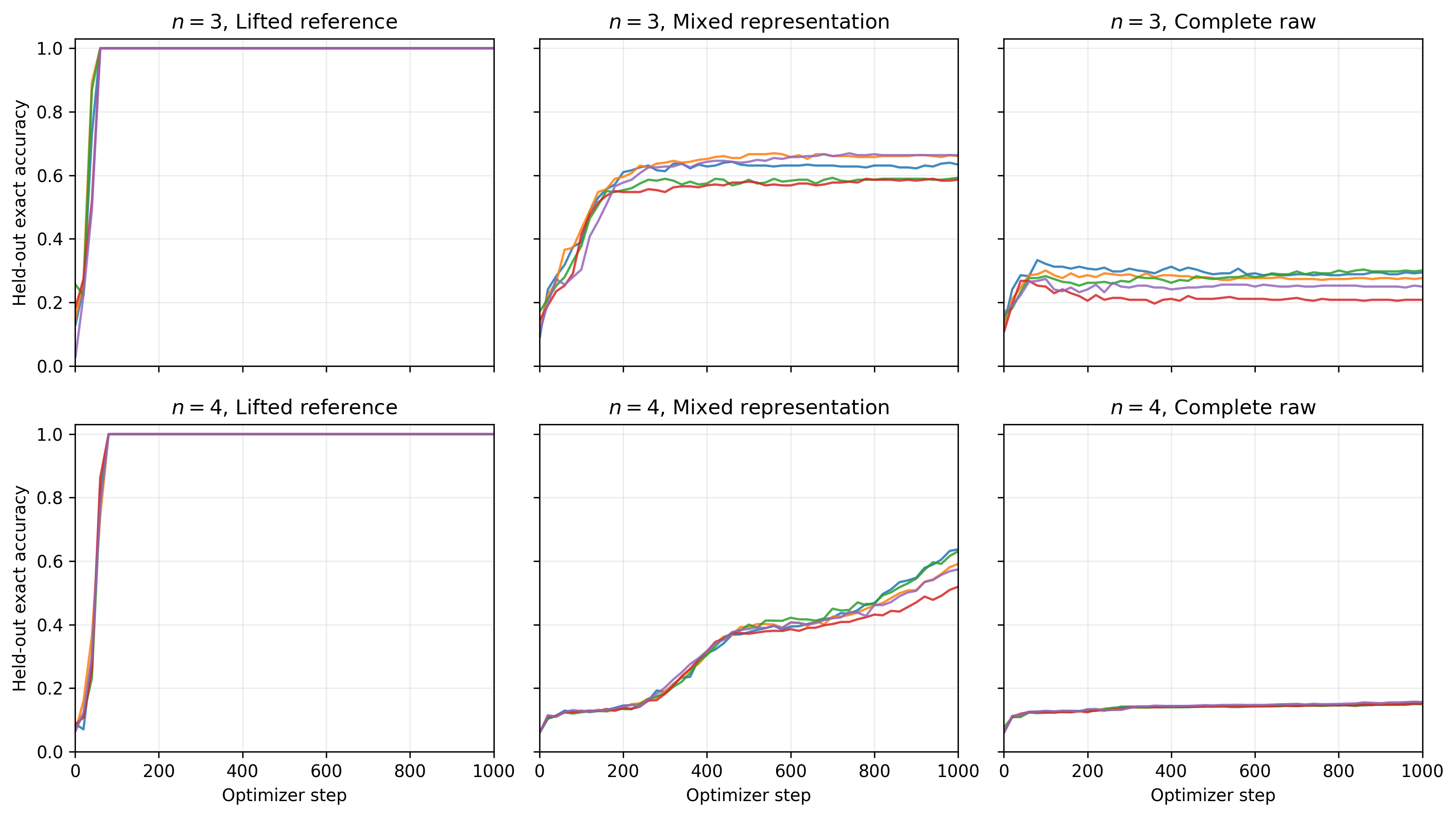}
\caption{Held-out exact accuracy for the lifted reference,
mixed representation, and complete raw formulations. Rows correspond to
\(n=3,4\), and each line represents one of five random seeds.}
\label{fig:formulation-learning-curves}
\end{figure}

\begin{table}[t]
\centering
\caption{Final test performance across the three formulations under matched
conditions. Results are means \(\pm\) sample standard deviations over five
random seeds.}
\label{tab:formulation-comparison}
\begin{tabular}{llcc}
\toprule
\(n\) & Formulation & Exact accuracy & Bit accuracy \\
\midrule
3 & reference
  & \(1.0000\pm0.0000\) & \(1.0000\pm0.0000\) \\
3 & mixed representation
  & \(0.6274\pm0.0367\) & \(0.8304\pm0.0205\) \\
3 & complete raw
  & \(0.2661\pm0.0378\) & \(0.6079\pm0.0226\) \\
4 & reference
  & \(1.0000\pm0.0000\) & \(1.0000\pm0.0000\) \\
4 & mixed representation
  & \(0.5905\pm0.0477\) & \(0.8385\pm0.0154\) \\
4 & complete raw
  & \(0.1528\pm0.0031\) & \(0.5928\pm0.0026\) \\
\bottomrule
\end{tabular}
\end{table}

\paragraph{Analysis.}
Figure~\ref{fig:formulation-learning-curves} and
Table~\ref{tab:formulation-comparison} show the same ordering for both field
sizes:
\[
\text{lifted reference}
\;>\;
\text{mixed representation}
\;>\;
\text{complete raw}.
\]
The same ordering holds for bit accuracy. Because the three formulations use
the same inputs, splits into training and test sets, model architecture, and
optimization budget, these differences cannot be attributed to unequal input
dimensions or test sets.

The lifted reference reaches perfect test accuracy for both $n=3$ and $n=4$.
Thus, including the irrelevant matrix $P$ does not prevent the MLP from
learning $J_n(\mathbf u)$. The mixed representation formulation achieves
lower accuracy than the lifted reference but remains much easier to learn
than the complete raw formulation. The gap between the mixed representation
and complete raw formulations is especially clear in exact accuracy for
$n=4$.

The held-out learning curves also reveal differences over the course of
optimization. The lifted reference reaches perfect held-out accuracy
within the first 100 optimization steps for both field sizes. For \(n=3\), the held-out accuracy of the mixed representation formulation
improves rapidly and then levels off, whereas that of the complete raw
formulation reaches a much lower plateau early in optimization. For $n=4$, the mixed
representation formulation continues to improve throughout the budget of
$1000$ steps, while the complete raw formulation improves only slowly.

These observations are consistent with the exact ANF results in
Table~\ref{tab:exact-anf}. By
Proposition~\ref{prop:lifted-reference}, the lifted reference has the same ANF
structure as the reference formulation. The reference formulation has the
smallest algebraic degree and joint ANF leap, the mixed representation
formulation has intermediate values, and the complete raw formulation has the
largest values for $n=3$ and $n=4$. As these ANF quantities increase, the
observed learning accuracy decreases under the tested conditions. %This is an
%empirical comparison and does not claim that algebraic degree or joint ANF
%leap alone determines neural network performance.

\subsection{Experiment 2: Generalization and representation redundancy}
\label{sec:exp2}

\paragraph{Purpose and design.}
Theorem~\ref{thm:exact-equivalence} shows that two bases induce the same
coordinate inversion map if and only if they belong to the same Galois orbit.
A random split of individual bases may therefore place different
representations of the same task in the training and test sets. This
experiment examines whether such overlap affects generalization for the
complete raw formulation.

We compare two splitting methods. In the \emph{random-basis split}, entire
basis matrices are assigned randomly to the training or test set, and all
$2^n$ operands associated with each basis remain together. A Galois orbit may contain bases from both sets. In the \emph{orbit-disjoint split},
entire Galois orbits are assigned to one set, so no Galois equivalent bases
appear in both sets.

Both methods use a $75/25$ division. For $n=3$, the $168$ bases form $56$
orbits of size $3$. The split contains $126$ training bases and $42$ test
bases, giving $1008$ training examples and $336$ test examples. In the
orbit-disjoint split, these bases correspond to $42$ training orbits and $14$
test orbits. For $n=4$, the $20{,}160$ bases form $5040$ orbits of size $4$.
The split contains $15{,}120$ training bases and $5040$ test bases, giving
$241{,}920$ training examples and $80{,}640$ test examples. The corresponding
orbit counts are $3780$ and $1260$.

For each field size and splitting method, we train an MLP with three hidden
layers of width $256$ for $1200$ optimization steps. Each condition is
repeated with ten random seeds, giving $40$ training runs in total. The
remaining settings are given in
Section~\ref{sec:experimental-setup}.

\paragraph{Results.}
Table~\ref{tab:orbit-generalization} reports the test performance under the
two splitting methods. 

\begin{table}[t]
\centering
\caption{Generalization under random-basis and Galois-orbit-disjoint splits
for the complete raw formulation. Results are means \(\pm\) sample standard
deviations over ten random seeds.}
\label{tab:orbit-generalization}
\begin{tabular}{llcc}
\toprule
\(n\) & Split & Exact accuracy & Bit accuracy \\
\midrule
3 & random basis
  & \(0.3089\pm0.0250\) & \(0.6388\pm0.0180\) \\
3 & orbit disjoint
  & \(0.2857\pm0.0096\) & \(0.6284\pm0.0211\) \\
4 & random basis
  & \(0.1591\pm0.0035\) & \(0.5980\pm0.0022\) \\
4 & orbit disjoint
  & \(0.1589\pm0.0040\) & \(0.5974\pm0.0032\) \\
\bottomrule
\end{tabular}
\end{table}

\paragraph{Analysis.}
For $n=3$, the random-basis split gives a mean exact accuracy $0.0232$ higher
than the orbit-disjoint split. Its mean bit accuracy is also $0.0104$ higher.
Allowing Galois equivalent bases to appear in both sets therefore gives a
modest improvement for the smaller field.

For $n=4$, the two splitting methods give nearly identical results. Their mean
exact accuracies differ by approximately $0.0002$, and their mean bit
accuracies differ by approximately $0.0006$. Both differences are smaller
than the variation across random seeds. We find no clear benefit from overlap
between equivalent tasks at $n=4$ under the tested model and training budget.

The theorem gives an exact redundancy factor of $n$ among basis
representations, but this redundancy does not necessarily produce a large
improvement in test performance. Its effect is modest for $n=3$ and is not
detectable for $n=4$ in this experiment.

\subsection{Experiment 3: Scaling with data and model width}
\label{sec:exp3}

\paragraph{Purpose and design.}
This experiment examines how the three formulations respond to increases in
training data and model width. We focus on \(n=4\), the larger and more difficult of the two field sizes
considered in Experiment~1, for which the complete input domain
\(\mathrm{GL}_4(\mathbb F_2)\times\mathbb F_2^4\) contains
\[
|\mathrm{GL}_4(\mathbb F_2)|\cdot2^4
=
20{,}160\cdot16
=
322{,}560
\]
input pairs. For each random seed, we divide this domain into
\(241{,}920\) training examples and \(80{,}640\) test examples. The test set
is held fixed, while the training set is subsampled using fractions
\[
25\%,\qquad 50\%,\qquad 100\%.
\]
These fractions correspond to
\[
60{,}480,\qquad 120{,}960,\qquad 241{,}920
\]
training examples, respectively. Within each seed, the same split and nested
training subsets are used for all three formulations and all model widths.

We compare the lifted reference, mixed representation, and complete raw
formulations. The lifted reference is defined in
Proposition~\ref{prop:lifted-reference}. We vary the hidden width over
\[
128,\qquad 256,\qquad 512.
\]
Each model is trained for five epochs. With batch size \(512\), the three training fractions correspond to \(595\), \(1185\), and \(2365\)
optimization steps, respectively. Thus, the number of epochs is fixed, while
larger training sets provide both more examples and more optimization steps.

Each combination of formulation, data fraction, and model width is repeated
with four random seeds, giving
\[
3\text{ formulations}\times
3\text{ data fractions}\times
3\text{ widths}\times
4\text{ seeds}
=
108
\]
training runs. The remaining model and optimization settings are those
described in Section~\ref{sec:experimental-setup}.

\paragraph{Results.}
Figure~\ref{fig:scaling} reports the held-out exact accuracy for all
\(27\) combinations of formulation, data fraction, and model width.

\begin{figure}[t]
\centering
\includegraphics[width=\textwidth]
{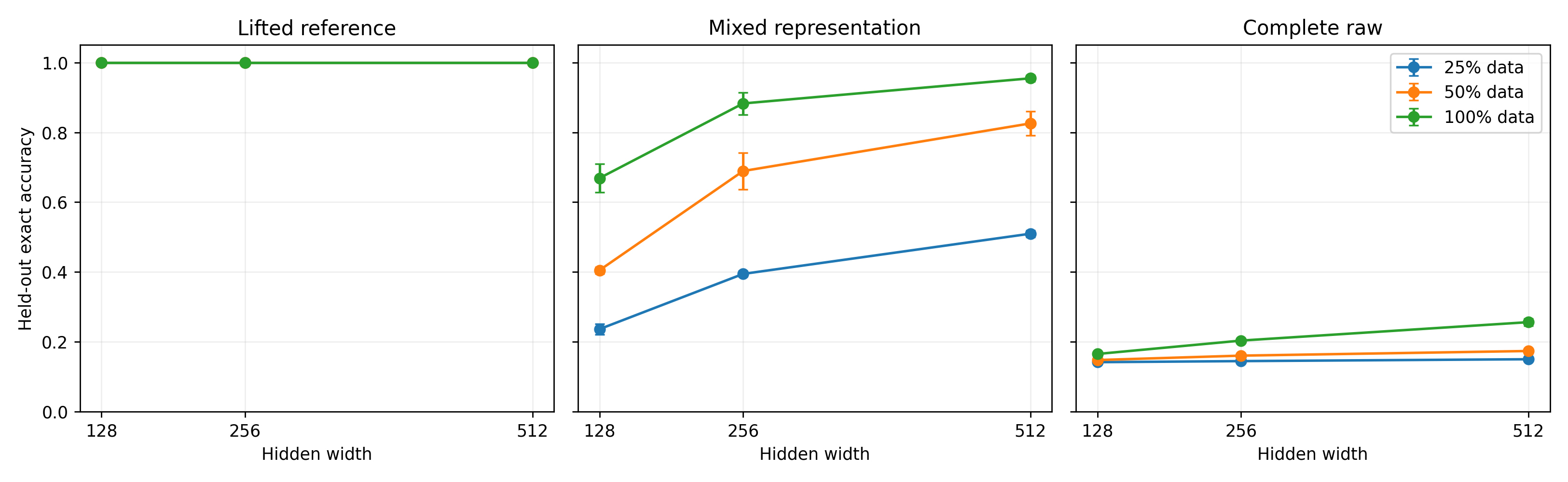}
\caption{Mean held-out exact accuracy at $n=4$ across model widths and training
fractions. Error bars show sample standard deviations over four seeds.}
\label{fig:scaling}
\end{figure}

The lifted reference reaches perfect exact accuracy in every condition. For the
mixed representation formulation with the full training set, increasing the
width from \(128\) to \(512\) raises exact accuracy from
\[
0.6688\pm0.0413
\quad\text{to}\quad
0.9546\pm0.0084.
\]
For the complete raw formulation, the corresponding increase is from
\[
0.1655\pm0.0027
\quad\text{to}\quad
0.2566\pm0.0100.
\]

\paragraph{Analysis.}

The lifted reference remains easy to learn across all training fractions
and model widths. Its perfect accuracy shows that the MLP can consistently
ignore the irrelevant matrix input and learn \(J_4(\mathbf u)\).

The mixed representation formulation improves with greater model width.
Its performance also improves when a larger training set is used while
the number of training epochs is fixed at five. At the largest setting,
it reaches approximately \(95.5\%\) exact accuracy. The complete raw
formulation also improves, but much more slowly. Even with the full
training set and width \(512\), its exact accuracy is approximately
\(25.7\%\).

These results extend the comparison in Experiment~1. Increasing the model
width and the training fraction under the five epoch protocol largely
closes the gap between the mixed representation formulation and the lifted
reference. In contrast, the complete raw formulation remains difficult
over the tested range. This behavior is consistent with its larger computed
algebraic degree and joint ANF leap for \(n=4\).

Because the number of optimization steps increases with the training
fraction, the effect of additional data cannot be separated completely
from the effect of additional optimization. The results should therefore
be interpreted as scaling under a training protocol of five epochs, rather
than as a controlled comparison with a fixed number of optimization steps.

\section{Discussion}
\label{sec:discussion}

\paragraph{Exact redundancy and empirical generalization.}
Theorem~\ref{thm:exact-equivalence} and Experiment~2 describe two
aspects of representation redundancy. The theorem shows that each coordinate
inversion task has exactly \(n\) basis representations, with the equivalence
classes given by Galois orbits. This is an exact property of the task family,
but it does not imply that a learner can identify or use the equivalence from
raw basis matrices.

Experiment~2 makes this distinction visible. Allowing bases from the same Galois orbit to appear in both the training and test sets gives only
a modest improvement for \(n=3\) and no clear improvement for \(n=4\) under
the tested conditions. Hence, exact representation redundancy can be present
without producing a large generalization gain for a standard MLP. Using this
redundancy more effectively may require a representation or model architecture
that makes the Galois action explicit.

\paragraph{ANF structure and learning difficulty.}
The theoretical and experimental results agree on the ordering of the
three formulations. The reference formulation has the smallest algebraic
degree and joint ANF leap and is learned easily. The mixed representation
formulation has intermediate values and improves strongly with additional
data and model width. For \(n=3\) and \(n=4\), the exact ANF computations assign the largest
degree and joint ANF leap to the complete raw formulation. In the learning
experiments, this formulation also achieves the lowest accuracy under all
tested conditions.

The comparison between the mixed representation and complete raw formulations
helps locate where this difficulty enters. The map
\[
J_n(P\mathbf u)
\]
uses \(P\) only to transform the input, while
\[
P^{-1}J_n(P\mathbf u)
\]
also converts the output back to the original basis. The theoretical results give bounds on the algebraic degree and joint ANF
leap. For \(n=3\) and \(n=4\), the exact computations reach both bounds.
The learning experiments also show a clear separation between the two
formulations, especially as the training fraction and model width increase.

This agreement supports the use of algebraic degree and joint ANF leap as
descriptions of the structure introduced by representation exposure. It does
not imply that either quantity alone determines neural-network performance.
In particular, our joint ANF leap is defined using the joint ANF support of a
vector-valued Boolean map, while the original staircase and leap results
concern Fourier support and specific learning settings. The experiments
provide evidence of a connection, but they are not a direct application of
those learning guarantees. 

The ANF analysis is performed on the polynomial extension defined using
the adjugate over the full Boolean matrix space, whereas the learning
experiments are restricted to invertible basis matrices. Establishing an
intrinsic notion of ANF complexity for the restricted domain is left for
future work.

\section{Conclusion}
\label{sec:conclusion}

We studied how basis choice affects finite-field inversion from two points of
view: exact representation redundancy and the structure of the joint map
obtained when the basis is included as part of the input. We proved that two
ordered bases induce the same coordinate inversion map if and only if they
belong to the same Galois orbit. Since the Galois action is free, every
inversion task has exactly $n$ basis representations.

We also compared the reference, mixed representation, and complete raw
formulations through algebraic degree and joint ANF leap. The reference
formulation has degree $n-1$ and joint ANF leap $1$, while the mixed
representation formulation has degree $2(n-1)$ and joint ANF leap $2$. For
the complete raw formulation, %every monomial in the joint ANF support has
%degree at least $n$, 
the algebraic degree is at most $3(n-1)$, and the joint
ANF leap is at least $n$. These
results show that exposing the basis transformation alters the ANF
structure of the task in ways that are not captured by the exact
redundancy count.

The experiments are consistent with the theoretical analysis. Exact
computations for $n=3$ and $n=4$ attain the raw degree and joint ANF leap
bounds. In the learning experiments, accuracy decreases from the lifted
reference to the mixed representation and complete raw formulations.
Increasing the amount of training data and the model width greatly improves
the mixed representation formulation, while the complete raw formulation
remains more difficult over the tested range. The exact redundancy within
Galois orbits gives only a modest generalization benefit for $n=3$ and no
clear benefit for $n=4$ under the tested conditions.

The main conclusion is that exact representation redundancy can coexist with
a joint map that is difficult to learn from raw inputs. Several inputs may
represent the same task without making that equivalence easy for a standard
learner to identify or use. The present exact computations are limited to
$n=3$ and $n=4$, and the learning experiments use a standard MLP without
explicit Galois symmetry. The scaling experiment also fixes the number of
epochs, so the effects of additional data and additional optimization steps
are not separated. Proving the exact raw ANF values for general $n$,
separating the effect of joint ANF leap from algebraic degree, and studying
models that explicitly use the Galois action are natural directions for
future work.

%%%%%%%%%%%%%%%%%%%%%%%%%%%%%%%%%%%%%%%%%%

%%%%%%%%%%%%%%%%%%%%%%%%%%%%%%%%%%%%%%%%%%
\vspace{6pt}

\appendix
\section{Additional exact ANF results}
\label{app:anf-details}
This appendix gives the degree profiles, coordinate-level monomial counts,
joint ANF support sizes, and joint ANF leap values from
Experiment~0. Let
\[
G=(G_0,\ldots,G_{m-1}):
\mathbb F_2^d\longrightarrow\mathbb F_2^m
\]
be a vector-valued Boolean map with coordinate ANFs
\[
G_k(\mathbf x)
=
\bigoplus_{A\subseteq\{0,\ldots,d-1\}}
c_{k,A}\prod_{i\in A}x_i.
\]
For $0\leq\ell\leq d$, let $N_\ell(G)$ denote the total number of ANF
monomials of degree $\ell$ across all output coordinates:
\[
N_\ell(G)
=
\sum_{k=0}^{m-1}
\left|
\left\{
A\subseteq\{0,\ldots,d-1\}:
c_{k,A}\neq0,\ |A|=\ell
\right\}
\right|.
\]
A monomial appearing in multiple output coordinates is counted once for each
coordinate in which it appears. The total coordinate-wise monomial count is
\[
N(G)
=
\sum_{\ell=0}^{d}N_\ell(G)
=
\sum_{k=0}^{m-1}
\left|
\operatorname{supp}_{\mathrm{ANF}}(G_k)
\right|.
\]

For each output coordinate, the ANF coefficients are recovered from the
complete truth table on the ambient Boolean domain using the Boolean Möbius
transform
\[
c_{k,A}
=
\bigoplus_{C\subseteq A}G_k(\mathbf1_C),
\]
where $\mathbf1_C\in\mathbb F_2^d$ is the indicator vector of $C$. All values
reported below are exact.

\subsection{Degree profiles}
Table~\ref{tab:complete-degree-profiles} reports the complete degree profiles.
An entry \(\ell:c\) means that \(N_\ell(G)=c\), with monomials appearing in
multiple output coordinates counted once for each coordinate.

\begin{table}[t]
\centering
\caption{Complete ANF degree profiles. Counts are summed over the output
coordinates.}
\label{tab:complete-degree-profiles}
\begin{tabular}{lll}
\toprule
\(n\) & Formulation & Degree profile \(\ell:c\) \\
\midrule
3 & reference
  & \(1:6,\;2:3\) \\
3 & mixed representation
  & \(2:18,\;3:9,\;4:18\) \\
3 & complete raw
  & \(3:24,\;4:66,\;5:42,\;6:36\) \\
\midrule
4 & reference
  & \(1:10,\;2:12,\;3:5\) \\
4 & mixed representation
  & \(2:40,\;3:48,\;4:164,\;5:180,\;6:120\) \\
4 & complete raw
  & \(4:168,\;5:792,\;6:1440,\;7:2280,\;8:1608,\;9:720\) \\
\bottomrule
\end{tabular}
\end{table}

The profiles show that the difference among the formulations is not limited
to their largest degrees. The mixed representation and complete raw
formulations contain monomials over wider degree ranges, and the complete raw
formulation has no monomial below degree \(n\) for either field size.

\subsection{Coordinate-level ANF statistics}
The preceding degree profiles combine the monomial counts from all output
coordinates. We now report the corresponding coordinate-level results. Output
coordinate \(k\) is the coefficient of \(t^k\) in the fixed polynomial basis:
\[
a_0+a_1t+\cdots+a_{n-1}t^{n-1}
\longleftrightarrow
(a_0,a_1,\ldots,a_{n-1})^{\mathsf T}.
\]
For each coordinate, we report both its degree profile and its total number of
ANF monomials. As above, an entry \(\ell:c\) means that the coordinate contains
\(c\) monomials of degree \(\ell\). Tables~\ref{tab:coordinate-anf-n3} and
\ref{tab:coordinate-anf-n4} report these statistics for \(n=3\) and \(n=4\),
respectively.

\begin{table}[t]
\centering
\caption{Coordinate-level ANF statistics for \(n=3\).}
\label{tab:coordinate-anf-n3}
\small
\begin{tabular}{lllr}
\toprule
Formulation & Output & Degree profile \(\ell:c\) & Monomials \\
\midrule
reference
  & \(0\) & \(1:3,\;2:1\) & 4 \\
reference
  & \(1\) & \(1:1,\;2:1\) & 2 \\
reference
  & \(2\) & \(1:2,\;2:1\) & 3 \\
\midrule
mixed representation
  & \(0\) & \(2:9,\;3:3,\;4:6\) & 18 \\
mixed representation
  & \(1\) & \(2:3,\;3:3,\;4:6\) & 12 \\
mixed representation
  & \(2\) & \(2:6,\;3:3,\;4:6\) & 15 \\
\midrule
complete raw
  & \(0\) & \(3:8,\;4:22,\;5:14,\;6:12\) & 56 \\
complete raw
  & \(1\) & \(3:8,\;4:22,\;5:14,\;6:12\) & 56 \\
complete raw
  & \(2\) & \(3:8,\;4:22,\;5:14,\;6:12\) & 56 \\
\bottomrule
\end{tabular}
\end{table}

For \(n=3\), Table~\ref{tab:coordinate-anf-n3} shows that the reference and
mixed representation formulations have different monomial counts across
their output coordinates. In contrast, the three output coordinates of the
complete raw formulation have identical degree profiles, each containing
\(56\) monomials. Summing the coordinate counts gives \(9\), \(45\), and
\(168\) monomials for the three formulations, respectively.

\begin{table}[t]
\centering
\caption{Coordinate-level ANF statistics for \(n=4\).}
\label{tab:coordinate-anf-n4}
\small
\begin{tabular}{lllr}
\toprule
Formulation & Output & Degree profile \(\ell:c\) & Monomials \\
\midrule
reference
  & \(0\) & \(1:4,\;2:2,\;3:2\) & 8 \\
reference
  & \(1\) & \(1:1,\;2:4,\;3:1\) & 6 \\
reference
  & \(2\) & \(1:2,\;2:3,\;3:1\) & 6 \\
reference
  & \(3\) & \(1:3,\;2:3,\;3:1\) & 7 \\
\midrule
mixed representation
  & \(0\) & \(2:16,\;3:8,\;4:32,\;5:72,\;6:48\) & 176 \\
mixed representation
  & \(1\) & \(2:4,\;3:16,\;4:52,\;5:36,\;6:24\) & 132 \\
mixed representation
  & \(2\) & \(2:8,\;3:12,\;4:40,\;5:36,\;6:24\) & 120 \\
mixed representation
  & \(3\) & \(2:12,\;3:12,\;4:40,\;5:36,\;6:24\) & 124 \\
\midrule
complete raw
  & \(0\) & \(4:42,\;5:198,\;6:360,\;7:570,\;8:402,\;9:180\)
  & 1752 \\
complete raw
  & \(1\) & \(4:42,\;5:198,\;6:360,\;7:570,\;8:402,\;9:180\)
  & 1752 \\
complete raw
  & \(2\) & \(4:42,\;5:198,\;6:360,\;7:570,\;8:402,\;9:180\)
  & 1752 \\
complete raw
  & \(3\) & \(4:42,\;5:198,\;6:360,\;7:570,\;8:402,\;9:180\)
  & 1752 \\
\bottomrule
\end{tabular}
\end{table}

For \(n=4\), Table~\ref{tab:coordinate-anf-n4} shows that the same pattern is
stronger. The reference formulation contains between \(6\) and \(8\)
monomials per coordinate, while the mixed representation formulation contains
between \(120\) and \(176\). Every coordinate of the complete raw formulation
contains \(1752\) monomials with the same degree profile. Summing the
coordinate counts gives the totals \(27\), \(552\), and \(7008\) reported in
Table~\ref{tab:exact-anf}.

The equality of the coordinate-level profiles in the complete raw formulation
is an observed property of the \(n=3\) and \(n=4\) computations. We do not
claim that this equality holds for arbitrary \(n\) or for every choice of
reference basis.

\subsection{Joint ANF leap verification}
The joint ANF leap is computed from the deduplicated joint ANF support
\(\mathcal A(G)\), while \(N(G)\) counts monomials separately across output
coordinates. For each formulation, the minimum positive monomial degree gives
a lower bound on the joint ANF leap, since the first nonempty support set in
any ordering must introduce at least that many variables.

For the reference and mixed representation formulations, the exact joint ANF
leaps follow from Theorems~\ref{thm:leap1} and
\ref{thm:leap2}. For the complete raw formulation, the theoretical result
gives the lower bound
\[
L_{\mathrm{ANF}}^{\mathrm{joint}}(f_{\mathrm{raw}})\geq n.
\]
To verify that this bound is attained for \(n=3\) and \(n=4\), we construct
an initial sequence of sets in \(\mathcal A(f_{\mathrm{raw}})\) such that each
set introduces at most \(n\) new variables and the sequence covers every
input variable. Once all input variables have appeared, the remaining sets
in the joint ANF support may be appended in any order without introducing
new variables. Table~\ref{tab:raw-leap-certificates} gives these explicit
certificate sequences.

\begin{table}[t]
\centering
\caption{Explicit joint ANF leap certificates for the complete raw
formulation. Each listed support set belongs to
\(\mathcal A(f_{\mathrm{raw}})\).}
\label{tab:raw-leap-certificates}
\small
\begin{tabular}{ccp{0.34\textwidth}p{0.30\textwidth}c}
\toprule
\(n\) & Step & Support set
& Newly introduced variables & Count \\
\midrule
3 & 1
& \(\{p_{01},p_{10},u_0\}\)
& \(\{p_{01},p_{10},u_0\}\) & 3 \\

3 & 2
& \(\{p_{00},p_{11},u_1\}\)
& \(\{p_{00},p_{11},u_1\}\) & 3 \\

3 & 3
& \(\{p_{12},p_{20},u_2\}\)
& \(\{p_{12},p_{20},u_2\}\) & 3 \\

3 & 4
& \(\{p_{02},p_{21},p_{22},u_2\}\)
& \(\{p_{02},p_{21},p_{22}\}\) & 3 \\
\midrule
4 & 1
& \(\{p_{02},p_{10},p_{33},u_3\}\)
& \(\{p_{02},p_{10},p_{33},u_3\}\) & 4 \\

4 & 2
& \(\{p_{03},p_{11},p_{32},u_2\}\)
& \(\{p_{03},p_{11},p_{32},u_2\}\) & 4 \\

4 & 3
& \(\{p_{12},p_{23},p_{30},u_0\}\)
& \(\{p_{12},p_{23},p_{30},u_0\}\) & 4 \\

4 & 4
& \(\{p_{00},p_{13},p_{21},u_1\}\)
& \(\{p_{00},p_{13},p_{21},u_1\}\) & 4 \\

4 & 5
& \(\{p_{00},p_{01},p_{20},p_{22},p_{31},u_0,u_1\}\)
& \(\{p_{01},p_{20},p_{22},p_{31}\}\) & 4 \\
\bottomrule
\end{tabular}
\end{table}

For \(n=3\), the four sets in
Table~\ref{tab:raw-leap-certificates} cover all \(12\) input variables and
introduce at most \(3\) new variables at each step. Hence,
\[
L_{\mathrm{ANF}}^{\mathrm{joint}}(f_{\mathrm{raw}})\leq3.
\]
Along with the theoretical lower bound, this gives
\[
L_{\mathrm{ANF}}^{\mathrm{joint}}(f_{\mathrm{raw}})=3.
\]

For \(n=4\), the five listed sets cover all \(20\) input variables. The final
set contains seven variables, but \(p_{00}\), \(u_0\), and \(u_1\) have
already appeared, so it introduces only four new variables. Therefore,
\[
L_{\mathrm{ANF}}^{\mathrm{joint}}(f_{\mathrm{raw}})\leq4.
\]
Combining this certificate with the lower bound gives
\[
L_{\mathrm{ANF}}^{\mathrm{joint}}(f_{\mathrm{raw}})=4.
\]

Thus, the exact joint ANF leaps are \(1\), \(2\), and \(3\) for \(n=3\),
and \(1\), \(2\), and \(4\) for \(n=4\), as summarized in
Tables~\ref{tab:leap-certificates} and \ref{tab:exact-anf}.

\begin{table}[t]
\centering
\caption{Exact verification of the joint ANF leap. The column
\(|\mathcal A(G)|\) gives the cardinality of the deduplicated joint ANF
support.}
\label{tab:leap-certificates}
\begin{tabular}{llrrr}
\toprule
\(n\) & Formulation & \(|\mathcal A(G)|\) & Joint ANF leap & Certificate length \\
\midrule
3 & reference            & 6    & 1 & 3  \\
3 & mixed representation & 36   & 2 & 6  \\
3 & complete raw         & 168  & 3 & 4  \\
4 & reference            & 14   & 1 & 4  \\
4 & mixed representation & 368  & 2 & 10 \\
4 & complete raw         & 6504 & 4 & 5  \\
\bottomrule
\end{tabular}
\end{table}

\clearpage

% Please provide either the correct journal abbreviation (e.g. according to the “The list of Title Word Abbreviations” https://portal.issn.org/ltwa) or the full name of the journal. 
% Citations and references in the Supplementary Materials are permitted provided that they also appear in the reference list here. 

%=====================================
% References, variant A: external bibliography
%=====================================

\bibliographystyle{apalike}
\bibliography{references}
\clearpage

%=====================================
% References, variant B: internal bibliography
%=====================================

% If authors have biography, please use the format below
%\section*{Short Biography of Authors}
%\bio
%{\raisebox{-0.35cm}{\includegraphics[width=3.5cm,height=5.3cm,clip,keepaspectratio]{Definitions/author1.pdf}}}
%{\textbf{Firstname Lastname} Biography of first author}
%
%\bio
%{\raisebox{-0.35cm}{\includegraphics[width=3.5cm,height=5.3cm,clip,keepaspectratio]{Definitions/author2.jpg}}}
%{\textbf{Firstname Lastname} Biography of second author}

% For the MDPI journals use author-date citation, please follow the formatting guidelines on http://www.mdpi.com/authors/references
% To cite two works by the same author: \citeauthor{ref-journal-1a} (\citeyear{ref-journal-1a}, \citeyear{ref-journal-1b}). This produces: Whittaker (1967, 1975)
% To cite two works by the same author with specific pages: \citeauthor{ref-journal-3a} (\citeyear{ref-journal-3a}, p. 328; \citeyear{ref-journal-3b}, p.475). This produces: Wong (1999, p. 328; 2000, p. 475)

%%%%%%%%%%%%%%%%%%%%%%%%%%%%%%%%%%%%%%%%%%
%% for journal Sci
%\reviewreports{\\
%Reviewer 1 comments and authors’ response\\
%Reviewer 2 comments and authors’ response\\
%Reviewer 3 comments and authors’ response
%}
%%%%%%%%%%%%%%%%%%%%%%%%%%%%%%%%%%%%%%%%%%

%\isPreprints{}{% This command is only used for ``preprints''.
%} % If the paper is ``preprints'', please uncomment this parenthesis.
\end{document}